\documentclass{article} % For LaTeX2e
\usepackage[submission]{colm2026_conference}

\usepackage{microtype}
\usepackage{hyperref}
\usepackage{url}
\usepackage{booktabs}
\usepackage{amsthm}
\usepackage{amsthm}
\usepackage{enumitem}
\newtheorem{lemma}{Lemma}
\usepackage{lineno}

\definecolor{darkblue}{rgb}{0, 0, 0.5}
\hypersetup{colorlinks=true, citecolor=darkblue, linkcolor=darkblue, urlcolor=darkblue}

\usepackage{hyperref}
\usepackage{url}
\usepackage{booktabs}
\usepackage{amsmath,amssymb,amsfonts}
\usepackage{graphicx}
\usepackage{xcolor}
\usepackage{wrapfig}
\usepackage{xspace}
\usepackage{multirow}
\usepackage{cleveref}
\usepackage{algorithm}
\usepackage{algpseudocode}
\newtheorem{theorem}{Theorem}
\usepackage{lineno}
\usepackage{pifont}
\usepackage[most]{tcolorbox}

\newtcolorbox{observationbox}{
  colback=blue!4,
  colframe=blue!40!black,
  boxrule=0.5pt,
  arc=2pt,
  left=6pt, right=6pt, top=6pt, bottom=6pt,
  fonttitle=\bfseries,
  title=Key Observation
}

\definecolor{darkblue}{rgb}{0, 0, 0.5}
\hypersetup{colorlinks=true, citecolor=darkblue, linkcolor=darkblue, urlcolor=darkblue}

\title{Sampling for Quality: Training-Free Reward-Guided LLM Decoding via Sequential Monte Carlo}

\author{
Jelena Markovic-Voronov\thanks{Equal contribution.}\qquad
Wenhui Zhu\footnotemark[1]\qquad
Bo Long \\
Zhipeng Wang\thanks{Corresponding author: dagarwal@linkedin.com, zhipwang@linkedin.com}\qquad
Suyash Gupta \qquad
Kayhan Behdin \\
Bee-Chung Chen \qquad
Deepak Agarwal \footnotemark[2] \thanks{Project Leader}\\
\Large{LinkedIn}
}
\linenumbers

\date{}
\renewcommand{\headrulewidth}{0pt}

\begin{document}

\ifcolmsubmission
\linenumbers
\fi

\maketitle

\begin{abstract}

We introduce a principled probabilistic framework for reward-guided decoding in large language models, addressing the limitations of standard decoding methods that optimize token-level likelihood rather than sequence-level quality. Our method defines a reward-augmented target distribution over complete sequences by combining model transition probabilities with prefix-dependent reward potentials. \emph{Importantly, the approach is training-free: it leaves model weights unchanged and instead modifies the inference distribution via reward potentials, with all gains arising purely from inference-time sampling.} To sample from this distribution, we develop Sequential Monte Carlo algorithms, including a computationally efficient prefix-only variant and a lookahead variant whose intermediate targets match the exact marginals of the full sequence distribution. The framework also integrates resample--move updates with Metropolis--Hastings rejuvenation and supports block-wise generation, subsuming common decoding strategies such as temperature sampling and power-tempered objectives. Empirical results across three 7B models show significant gains. On code generation (HumanEval), our method improves base performance by up to \textbf{54.9\%} and surpasses the strongest sampling baselines by \textbf{9.1\%}--\textbf{15.3\%}. On mathematical reasoning (MATH500), it achieves gains of up to \textbf{8.8\%}. Notably, it reaches \textbf{87.8\%} on HumanEval and \textbf{78.4\%} on MATH500 with \textbf{Qwen2.5-7B}, consistently outperforming the reinforcement learning method GRPO.

\end{abstract}
% The abstract paragraph should be indented 1/2~inch (3~picas) on both left and
% right-hand margins. Use 10~point type, with a vertical spacing of 11~points.
% The word \textit{Abstract} must be centered and in point size 12. Two
% line spaces precede the abstract. The abstract must be limited to one
% paragraph.

\section{Introduction}

Large language models (LLMs) are typically used through autoregressive decoding,
where tokens are generated sequentially according to the model's next-token
distribution~\citep{vaswani2017attention, radford2018improving}. Common decoding strategies such as beam search
\citep{vijayakumar2016diverse}, nucleus sampling \citep{holtzman2020nucleus},
and temperature sampling \citep{wang2020contextual} are designed primarily to control diversity and
fluency of generated text. However, these approaches optimize token-level
likelihood rather than the quality of the full generated sequence.

In many important applications, the quality of a response depends on properties of the entire sequence rather than individual tokens. This is particularly
evident in reasoning, code generation, and mathematical problem solving,
where correctness or logical consistency can only be evaluated at the sequence
level. Several recent works therefore introduce additional scoring mechanisms,
such as verifiers for complete responses \citep{cobbe2021training}, process reward models
\citep{lightman2023step, uesato2022solving, zhang2025lessons}, or human feedback signals
\citep{ouyang2022training, bai2022constitutional}.
These signals provide \emph{reward potentials} that
score partial or complete outputs and guide generation toward desirable solutions.

Despite their widespread use, reward signals are typically incorporated only heuristically. Methods such as best-of-$N$ sampling \citep{brown2020language, huang2025best, chow2024bon} and re-ranking with process reward models \citep{setlur2024processverifiers, snell2024scaling} do not integrate rewards into the sampling distribution itself. As a result, generation is still largely driven by the base model likelihood, with rewards only applied after candidate generation. This can be suboptimal, particularly when reward signals are strong or misaligned with likelihood, and it prevents these methods from defining a coherent probabilistic target over full sequences.
% In an auxiliary analysis with an verifier, we find that \emph{weight-side} likelihood shaping (injecting a cumulative \(\alpha \sum_t \log p_\theta\) term into SMC resampling weights) can accelerate particle collapse when base model likelihood and verifier reward are misaligned, motivating reward-augmented targets and proposal design (More details refer to Appendix~\S\ref{sec:appendix:humaneval-misalignment}).

In this paper, we introduce a unified framework for sampling from
\emph{reward-modified sequence distributions} using Sequential Monte Carlo (SMC). We consider targets of the form
\begin{equation} \label{eq:unified-full-target}
\Pi(x_{1:T}\mid q)
\propto
\prod_{t=1}^{T} m_t(x_t\mid q,x_{<t})
\prod_{t=1}^{T} \psi_t(x_{1:t},q),
\end{equation}
where \(x_{1:T}\) is the generated sequence and \(q\) is the prompt. Here,
\(m_t\) is a transition factor derived from the base language model, and
\(\psi_t\) is a reward potential over prefixes. This defines a Feynman--Kac model, with the language model inducing the Markov process and the rewards acting as multiplicative potentials \citep{moral2004feynman,del2006sequential}. A similar form of a reward-modified base distribution appears in Direct Preference Optimization \citep{rafailov2023direct}, where rewards are incorporated at the whole sequence level.

Importantly, our method is \emph{training-free}: it leaves the base model weights unchanged and instead modifies the inference distribution through reward potentials. All improvements arise purely from inference-time sampling, without any additional training or fine-tuning.

Our framework unifies several decoding objectives, including
(i) \emph{tempered next-token targets} (temperature-based decoding) and
(ii) \emph{power-tempered sequence targets}, corresponding to posterior
sampling under a power prior \citep{ibrahim2000power, neal2001annealed},
which arise from different choices of \(m_t\).
Within this framework, we derive two types of intermediate sampling targets. The first is a \emph{prefix-only} target based on prefix factors of the model; it is computationally efficient but does not, in general, match the true marginals of the full sequence-level target. The second is a \emph{lookahead} target, which incorporates a correction for future tokens and yields intermediate distributions that exactly coincide with the marginals of the full target.
The framework also supports extensions such as block-wise generation of multiple tokens and Metropolis--Hastings (MH) rejuvenation within SMC, enabling efficient sampling while preserving the desired target distribution.
\setlength{\parskip}{0pt}
\paragraph{Contributions.}
The main contributions of this work are:
\begin{itemize}[leftmargin=15pt]

\item \textbf{Reward-aware probabilistic decoding.}
We introduce a probabilistic framework that directly incorporates reward potentials into the target distribution for LLM sequence generation.

\item \textbf{Unified formulation of decoding objectives.}
We show that temperature-based decoding and power-tempered sequence sampling emerge as special cases of a unified reward-modified target.

\item \textbf{SMC algorithms for reward-guided generation.}
We develop Sequential Monte Carlo algorithms for sampling from these targets,
including both prefix-only intermediate targets and lookahead targets that match the true marginals of the full sequence distribution.

\item \textbf{Empirical validation.}
We demonstrate significantly improved performance across multiple LLMs (Qwen2.5-7B,
Qwen2.5-Math-7B, DeepSeek-Math-7B) and benchmarks (MATH500, HumanEval, GPQA),
outperforming best-of-$N$, prior SMC-based approaches, and the reinforcement learning method GRPO.

\end{itemize}

% \item \textbf{Efficient extensions.}
% We develop block-wise sampling and Metropolis--Hastings rejuvenation moves that
% enable efficient generation while preserving the desired target distribution.

% Together, these results provide a principled Monte Carlo framework for
% reward-guided language generation and clarify the relationship between several
% recent sampling-based decoding methods for LLMs.
\section{Related Work}

Sequential Monte Carlo methods approximate sequences of distributions using weighted particle systems and were originally developed as particle filters \citep{gordon1993novel,doucet2001sequential}. 
They are naturally formulated in the Feynman--Kac framework, where a sequence of path-space distributions is defined by a Markov process and multiplicative potential functions \citep{moral2004feynman,del2006sequential}. 
The general SMC sampler of \citet{del2006sequential} extends these ideas to broader targets, with theoretical guarantees \citep{chopin2004central} and practical design principles such as resampling and proposal adaptation \citep{doucet2011tutorial}, making SMC a flexible tool for high-dimensional sequential inference.

Recent work has applied Monte Carlo methods to control language model generation. In particular, SMC-based approaches have been used to steer generation under structural or semantic constraints \citep{lew2023sequential,loula2025syntactic}, where \citet{lew2023sequential} samples from intermediate targets akin to our prefix-only formulation, and \citet{loula2025syntactic} incorporates a limited one-step lookahead. Additionally, adaptive weighted rejection sampling has been proposed for efficient controlled generation under local constraints \citep{lipkin2025fast}. Most closely related, \citet{zhao2024probabilistic} use twisted SMC with \emph{learned} twist functions, requiring additional neural networks to approximate target marginals, and focus on tasks such as toxic generation control, sentiment-conditioned review generation, and infilling. In contrast, our method is \emph{training-free}: our lookahead term corresponds to the optimal twist, which we estimate online and integrate with block-wise resample--move SMC and MH rejuvenation. \citet{park2024grammar} focus on grammar-aligned decoding and introduce an expected future grammaticality term, analogous to our lookahead, estimated via adaptive sampling.

Additionally, a line of research focuses on sampling from power-tempered variants of the base model distribution. Sampling from the power distribution $p(x_{1:T}\mid q)^{\alpha}$ without reward potentials has been explored using both Metropolis--Hastings and SMC methods. \citet{karan2025reasoning} introduce an MH algorithm for this setting, while \citet{azizi2026power} develop a faster SMC approach targeting the same distribution. In both cases, the intermediate targets are chosen as $p(x_{1:t}\mid q)^{\alpha}$, which do not correspond to the true marginals of the full sequence distribution. \citet{ji2026scalable} instead derive the exact next-token distribution implied by the power target and sample directly from it, avoiding MH or SMC, although relying on approximations to the target distribution. In contrast, our work incorporates reward potentials directly into the sampling target and develops SMC algorithms for sampling from the resulting reward-augmented sequence distribution.

\section{Method}

\subsection{Background}
\setlength{\parskip}{0pt}
\paragraph{Sequential Monte Carlo.}
Sequential Monte Carlo methods approximate a sequence of distributions using a set of weighted particles \citep{gordon1993novel,doucet2001sequential,del2006sequential}. 
Given unnormalized targets $\gamma_t(x_{1:t})$, SMC constructs particles $\{x_{1:t}^{(i)}\}_{i=1}^N$ together with associated importance weights $\{W_t^{(i)}\}_{i=1}^N$ iteratively as follows:
\begin{itemize}[leftmargin=15pt]
    \item Propagation: sample
    $x_t^{(i)} \sim r_t(\cdot \mid x_{<t}^{(i)})$ from a proposal distribution $r_t$.
    \item Weighting: define the incremental importance weight
    \begin{equation} \label{eq:smc-weights}
    w_t^{(i)}
    =
    \frac{\gamma_t(x_{1:t}^{(i)})}
         {\gamma_{t-1}(x_{1:t-1}^{(i)})\, r_t(x_t^{(i)} \mid x_{<t}^{(i)})},
    \end{equation}
    and update the (unnormalized) importance weights recursively as
    $W_t^{(i)} = W_{t-1}^{(i)} \, w_t^{(i)},\; W_0^{(i)} = 1.$
    \item Resampling (optional): particles are resampled according to their normalized weights $\widetilde W_t^{(i)} = W_t^{(i)} / \sum_{j=1}^N W_t^{(j)}$ to mitigate degeneracy, with resampling triggered when the effective sample size $\mathrm{ESS}_t = 1/\sum_{i=1}^N (\widetilde W_t^{(i)})^2$ falls below a prescribed threshold.
\end{itemize}

This framework is well suited to autoregressive models, where sequence distributions factorize over time, as in language model generation.
\setlength{\parskip}{0pt}
\paragraph{Resampling and MH Rejuvenation.}
Resampling reduces weight degeneracy but can decrease particle diversity. To address this, SMC is often combined with MCMC updates (``resample--move'' SMC), which preserve the target distribution while improving exploration. A common choice is a Metropolis--Hastings step \citep{gilks2001following}. Given a particle $x_{1:t}$, propose $x'_{1:t} \sim Q_t(\cdot \mid x_{1:t})$ and accept with probability
\begin{equation} \label{eq:mh-acceptance-prob}
a(x_{1:t}, x'_{1:t})
=
\min\left(
1,\;
\frac{\gamma_t(x'_{1:t}) Q_t(x_{1:t} \mid x'_{1:t})}
     {\gamma_t(x_{1:t}) Q_t(x'_{1:t} \mid x_{1:t})}
\right).
\end{equation}
These rejuvenation steps restore diversity and enable particles to better explore high-probability regions, which is particularly important for sharply peaked or reward-modified sequence distributions.

\subsection{Reward-Guided Tempered Base and Powered Base Distributions}

Our unified framework considers target distributions of the form in \eqref{eq:unified-full-target}. This framework covers two target distributions.
\setlength{\parskip}{0pt}
\paragraph{Target I: tempered next-token distribution (tempered base for short).}
Let
\begin{equation}
\tilde p_t^{(\alpha)}(x_t\mid q,x_{<t})
=
\frac{p(x_t\mid q,x_{<t})^{\alpha}}
{Z_t(x_{<t})},
\qquad
Z_t(x_{<t})=\sum_{v} p(v\mid q,x_{<t})^{\alpha}.
\end{equation}

which is the next-token distribution obtained from the base model with temperature $1/\alpha$.
Taking $m_t(x_t\mid q,x_{<t})=\tilde p_t^{(\alpha)}(x_t\mid q,x_{<t})$, \eqref{eq:unified-full-target} becomes
\begin{equation}
\label{eq:rho-full}
\Pi_{\mathrm{I}}(x_{1:T}\mid q)
\propto
\prod_{t=1}^{T}
\tilde p_t^{(\alpha)}(x_t\mid q,x_{<t})
\prod_{t=1}^{T}
\psi_t(x_{1:t},q).
\end{equation}

\setlength{\parskip}{0pt}
\paragraph{Target II: powered-tempered sequence distribution (powered base for short).}
Let
\begin{equation}
\label{eq:pi-full}
\Pi_{\mathrm{II}}(x_{1:T}\mid q)
\propto
p(x_{1:T}\mid q)^{\alpha}
\prod_{t=1}^{T}\psi_t(x_{1:t},q)=\prod_{t=1}^{T} p(x_t\mid q,x_{<t})^{\alpha}
\prod_{t=1}^{T}\psi_t(x_{1:t},q),
\end{equation}
where $p(x_{1:T}\mid q)=\prod_{t=1}^{T}p(x_t\mid q,x_{<t})$.
Thus this is also of the form \eqref{eq:unified-full-target}, with $m_t(x_t\mid q,x_{<t})=p(x_t\mid q,x_{<t})^{\alpha}.$

Target~I differs from Target~II by additional prefix-dependent normalization factors $Z_t(x_{<t})$, and therefore the two targets are not identical in general. The conditional next-token distributions $\Pi(x_t\mid q, x_{<t})$ for both targets are derived in Appendix \ref{sec:conditional:distributions}.
From a Bayesian perspective, the base language model can be viewed as a sequence prior $p(x_{1:T}\mid q)=\prod_{t=1}^T p_\theta(x_t\mid q,x_{<t})$, while the reward potential $\Psi(x_{1:T})=\prod_{t=1}^T \psi_t(x_{1:t},q)$ plays the role of a likelihood, so that $p(x_{1:T}\mid q)\Psi(x_{1:T})$ corresponds to the Bayesian posterior over sequences. Target~II corresponds to the \emph{power posterior} used in Bayesian inference, $\text{prior}^{\alpha}\cdot\text{likelihood}$, yielding $\Pi_{\mathrm{II}}(x_{1:T}\mid q)\propto p(x_{1:T}\mid q)^{\alpha}\cdot\Psi(x_{1:T})$. For $\alpha>1$, this distribution sharpens the prior and increases the probability of sequences that are already likely under the base model. \cite{karan2025reasoning} show that sampling from the power-tempered base distribution without incorporating a reward potential can achieve performance comparable to that obtained via reinforcement learning. In contrast, Target~I corresponds to temperature-based decoding and sharpens token-level probabilities $p(x_t\mid q,x_{<t})\rightarrow \tilde p_t^{(\alpha)}(x_t\mid q,x_{<t})$, thereby defining a modified autoregressive model.

\subsection{Prefix-Only and Lookahead Intermediate Targets for Sampling}

We now define two intermediate targets $\gamma_t(x_{1:t}\mid q)$ used in sampling with the same final distribution $\Pi(x_{1:T}\mid q)$. The first is prefix only, meaning its intermediate target at index $t$ depend on $m_s$ and $\psi_s$ for $s=1,\ldots, t$ and not the rest. The distribution of these intermediate targets do not correspond to the marginal distribution of $\Pi(x_{1:t}\mid q)$ since that distribution has dependence on $m_s$ and $\psi_s$ for $s>t$ as well. The second  is the lookahead intermediate target whose distribution correspond to $\Pi(x_{1:t}\mid q)$ exactly. 

\begin{lemma}[Prefix-only intermediate targets]
\label{lem:prefix_smc}
For either target, define the unnormalized prefix targets
\begin{equation}
\label{eq:generic-simple-prefix}
\gamma_t^{\mathrm{prf}}(x_{1:t}\mid q; \Pi)
=
\prod_{s=1}^{t} m_s(x_s\mid q,x_{<s})
\prod_{s=1}^{t} \psi_s(x_{1:s},q),
\qquad t=1,\dots,T.
\end{equation}
Then $\gamma_t^{\mathrm{prf}}$ satisfies the recursion
\begin{equation} \label{eq:prefix-only-gamma-recursion}
\gamma_t^{\mathrm{prf}}(x_{1:t}\mid q; \Pi)
=
\gamma_{t-1}^{\mathrm{prf}}(x_{1:t-1}\mid q; \Pi)\,
m_t(x_t\mid q,x_{<t})\,
\psi_t(x_{1:t},q).
\end{equation}

\end{lemma}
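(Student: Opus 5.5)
The recursion follows directly from the definition \eqref{eq:generic-simple-prefix}. My plan is to peel off the $s=t$ factor from each of the two finite products. I will take the convention that empty products equal $1$, so that $\gamma_0^{\mathrm{prf}}\equiv 1$. With this convention the case $t=1$ reads $\gamma_1^{\mathrm{prf}}(x_1\mid q;\Pi)=m_1(x_1\mid q)\,\psi_1(x_1,q)$, which is exactly \eqref{eq:generic-simple-prefix} at $t=1$. This convention also matches the initialization $W_0^{(i)}=1$ used in the SMC weighting step.

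For $t\ge 2$, I will write
\begin{equation*}
\prod_{s=1}^{t} m_s(x_s\mid q,x_{<s})=\Bigl(\prod_{s=1}^{t-1} m_s(x_s\mid q,x_{<s})\Bigr)\, m_t(x_t\mid q,x_{<t}),
\qquad
\prod_{s=1}^{t} \psi_s(x_{1:s},q)=\Bigl(\prod_{s=1}^{t-1} \psi_s(x_{1:s},q)\Bigr)\,\psi_t(x_{1:t},q).
\end{equation*}
Since all factors are nonnegative scalars, I can regroup them by commutativity. The two bracketed products together form precisely $\gamma_{t-1}^{\mathrm{prf}}(x_{1:t-1}\mid q;\Pi)$. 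The one point to check is that each factor with index $s\le t-1$ depends only on $x_{1:s}\subseteq x_{1:t-1}$ and on $q$, which makes the bracketed term a genuine function of the shorter prefix. Combining these gives \eqref{eq:prefix-only-gamma-recursion}.

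Because the argument never uses the specific form of $m_t$, it applies verbatim to both Target~I, with $m_t=\tilde p_t^{(\alpha)}$, and Target~II, with $m_t=p^{\alpha}$. This is why the lemma can be stated ``for either target.''

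There is no real obstacle here. The only care needed is in the bookkeeping: the base case, the empty-product convention, and confirming that the factors are measurable with respect to the prefix, so that $\gamma_{t-1}^{\mathrm{prf}}$ is well defined on $x_{1:t-1}$. As a remark, substituting the recursion into \eqref{eq:smc-weights} gives the incremental weight $w_t=m_t\psi_t/r_t$. In particular, $w_t=\psi_t$ when the proposal is $r_t=m_t$ (normalized, as in Target~I), which is what motivates stating the lemma.
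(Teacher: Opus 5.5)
Your proposal is correct and matches the paper's argument: the paper simply notes that the recursion follows directly from the definition of $\gamma_t^{\mathrm{prf}}$, which amounts to splitting off the $s=t$ factors of both products exactly as you do. Your explicit empty-product convention $\gamma_0^{\mathrm{prf}}\equiv 1$ for the $t=1$ case is a sensible addition that the paper leaves implicit.
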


This lemma follows directly from the definitions of $\gamma_t^{\mathrm{prf}}$. The intermediate targets $\gamma_t^{\mathrm{prf}}$ and the corresponding SMC weights $w_t^{\mathrm{prf}}$ are instantiated for targets I and II in Appendix \ref{sec:instantiation}. It is worth noting that for proposal equal to tempered base $\tilde p^{(\alpha)}(x_t\mid q, x_{<t})$ the SMC weight updates $w_t^{\mathrm{prf}}(x_{1:t}; \Pi_{\mathrm{I}})$ simplify to only depend only on the reward potential and not base model probabilities.

%%%%%%%%%%%%%%%%%%%%%%%%%%%%%%%%%%%%%%%%%%%%%%%%%%%%%
%%%%%%%%%%%%%%%%%%%%%%%%%%%%%%%%%%%%%%%%%%%%%%%%%%%%%
%%%%%%%%%%%%%%%%%%%%%%%%%%%%%%%%%%%%%%%%%%%%%%%%%%%%%

\begin{lemma}[Lookahead intermediate targets]
Define the intermediate lookahead targets as the exact marginals
\begin{equation}
\label{eq:generic-lookahead-prefix-lemma}
\gamma_t^{\mathrm{look}}(x_{1:t}\mid q; \Pi)
=
\sum_{x_{t+1:T}}
\prod_{s=1}^{T} m_s(x_s\mid q,x_{<s})
\prod_{s=1}^{T} \psi_s(x_{1:s},q).
\end{equation}
Introduce the lookahead term
\begin{equation}
\label{eq:generic-lookahead-term-lemma}
L_t(x_{1:t},q; \Pi)
=
\sum_{x_{t+1:T}}
\prod_{s=t+1}^{T} m_s(x_s\mid q,x_{<s})
\prod_{s=t+1}^{T} \psi_s(x_{1:s},q).
\end{equation}
Then the prefix targets factorize as
\begin{equation}
\label{eq:generic-lookahead-factorization-lemma}
\gamma_t^{\mathrm{look}}(x_{1:t}\mid q; \Pi)
=
\left[
\prod_{s=1}^{t} m_s(x_s\mid q,x_{<s})
\prod_{s=1}^{t} \psi_s(x_{1:s},q)
\right]
L_t(x_{1:t},q; \Pi),
\end{equation}
and satisfy the recursion
\begin{equation}
\label{eq:generic-lookahead-recursion-lemma}
\gamma_t^{\mathrm{look}}(x_{1:t}\mid q; \Pi)
=
\gamma_{t-1}^{\mathrm{look}}(x_{1:t-1}\mid q; \Pi)\,
m_t(x_t\mid q,x_{<t})\,
\psi_t(x_{1:t},q)\,
\frac{L_t(x_{1:t},q; \Pi)}{L_{t-1}(x_{1:t-1},q; \Pi)}.
\end{equation}
\end{lemma}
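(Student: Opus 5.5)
The plan is to prove the factorization first and then obtain the recursion as an algebraic consequence of it.

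For the factorization, fix $t$ and the prefix $x_{1:t}$. Split each full product $\prod_{s=1}^{T}$ in the definition of $\gamma_t^{\mathrm{look}}$ at $s=t$:
\[
\prod_{s=1}^{T} m_s\prod_{s=1}^{T}\psi_s=\Bigl[\prod_{s=1}^{t} m_s\prod_{s=1}^{t}\psi_s\Bigr]\Bigl[\prod_{s=t+1}^{T} m_s\prod_{s=t+1}^{T}\psi_s\Bigr].
\]
For $s\le t$, the factors $m_s(x_s\mid q,x_{<s})$ and $\psi_s(x_{1:s},q)$ depend only on $x_{1:t}$. They are therefore constant with respect to the summation variables $x_{t+1:T}$ and can be pulled out of the sum. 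What remains inside the sum is exactly $L_t(x_{1:t},q;\Pi)$, which gives the factorization. The boundary cases follow from the conventions $L_T\equiv 1$ (empty sum over an empty product) and $\gamma_0^{\mathrm{look}}=L_0$ (the total normalizer), so the statement holds for every $t=1,\dots,T$.

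For the recursion, write the factorization at indices $t$ and $t-1$. The bracketed prefix product at $t$ equals the prefix product at $t-1$ times $m_t(x_t\mid q,x_{<t})\,\psi_t(x_{1:t},q)$; this is exactly the recursion of Lemma~\ref{lem:prefix_smc}, i.e.\ $\gamma_t^{\mathrm{prf}}=\gamma_{t-1}^{\mathrm{prf}}m_t\psi_t$. Hence
\[
\gamma_t^{\mathrm{look}}=\gamma_{t-1}^{\mathrm{prf}}\,m_t\,\psi_t\,L_t=\gamma_{t-1}^{\mathrm{look}}\,m_t\,\psi_t\,\frac{L_t}{L_{t-1}},
\]
where the second step substitutes $\gamma_{t-1}^{\mathrm{prf}}=\gamma_{t-1}^{\mathrm{look}}/L_{t-1}$.

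The main obstacle is this division by $L_{t-1}$, which is valid only when $L_{t-1}(x_{1:t-1},q;\Pi)>0$. I would handle it as follows. Since $L_{t-1}=\sum_{x_t} m_t\psi_t L_t$, a zero value of $L_{t-1}$ forces every term $m_t\psi_t L_t$ to vanish. In that case both sides of the recursion are zero under the convention $0/0=0$, or equivalently one restricts to prefixes in the support of $\gamma_{t-1}^{\mathrm{look}}$, which are the only ones an SMC sampler visits with positive weight. A short remark along these lines would close the proof.

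As a consistency check, the identity $L_{t-1}=\sum_{x_t} m_t\psi_t L_t$ also gives $\sum_{x_t}\gamma_t^{\mathrm{look}}=\gamma_{t-1}^{\mathrm{look}}$. This confirms that $\gamma_t^{\mathrm{look}}$ are consistent marginals of $\Pi$, as intended.
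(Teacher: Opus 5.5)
Your proof is correct and follows the route the paper intends. The paper simply says the result follows directly from the definition: you pull the $x_{1:t}$-dependent factors out of the sum over $x_{t+1:T}$, then divide the factorizations at $t$ and $t-1$. Your handling of the boundary conventions ($L_T\equiv 1$, $\gamma_0^{\mathrm{look}}=L_0$) and of the degenerate case $L_{t-1}=0$ adds care the paper leaves implicit; note that the zero-case argument relies on nonnegativity of $m_t$ and $\psi_t$.
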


The result follows directly from the definition of $\gamma_t^{\mathrm{look}}$. The specific forms of the SMC incremental weights $w_t^{\mathrm{look}}$ for Targets~I and~II are given in Appendix~\ref{sec:instantiation}. 

Our lookahead intermediate target corresponds to a twisted Feynman--Kac construction, where \(L_t\) acts as the optimal twisting (future-value) function, yielding the exact marginals of the full path measure~\citep{whiteley2012twisted}. This property is not shared by the prefix-only targets used in~\citet{azizi2026power} and~\citet{lew2023sequential}. Moreover, we estimate \(L_t\) online via Monte Carlo rollouts, making our method training-free. Consistent with this perspective, Theorem~\ref{theorem:mse} in Appendix~\ref{section:mse} shows that lookahead targets are optimal in that they minimize the mean squared error relative to the final importance weights. In contrast, prefix-only targets can incur large error when \(L_t\) deviates from one, while approximate lookahead targets based on an unbiased estimator \(\widehat{L}_t\) incur only vanishing excess error as the number of lookahead samples increases. We next construct this unbiased estimator $\widehat{L}_t$ separately for each target.

% Our lookahead intermediate target corresponds to a twisted Feynman--Kac construction, where \(L_t\) acts as the optimal twisting (future-value) function, yielding the exact marginals of the full path measure in line with \citet{whiteley2012twisted}. Consistent with this perspective, Theorem~\ref{theorem:mse} in Appendix~\ref{section:mse} shows that lookahead targets are optimal in that they minimize the mean squared error relative to the final importance weights. In contrast, prefix-only targets can incur large error when \(L_t\) deviates from one, while approximate lookahead targets based on an unbiased Monte Carlo estimator \(\widehat{L}_t\) incur only vanishing excess error as the number of lookahead samples increases.

\setlength{\parskip}{0pt}
\paragraph{Lookahead estimates for Target I.}
For the tempered next-token target, $m_t=\tilde p_t^{(\alpha)}$, so
\begin{equation} \label{eq:targetI:Lt}
L_t(x_{1:t},q; \Pi_{\mathrm{I}})
% \sum_{x_{t+1:T}}
% \prod_{s=t+1}^{T} \left(
% \tilde p_s^{(\alpha)}(x_s\mid q,x_{<s})
% \psi_s(x_{1:s},q) \right) 
= \mathbb{E}_{x_{t+1:T}\sim \tilde p^{(\alpha)}(\cdot\mid q,x_{1:t})}
\left[
\prod_{s=t+1}^{T}\psi_s(x_{1:s},q)
\right],
\end{equation}
where the expectation is under tempered conditional base distribution. Hence, if $x_{t+1:T}^{(1)},\dots,x_{t+1:T}^{(J)}
\sim
\tilde p^{(\alpha)}(\cdot\mid q,x_{1:t}),$
an unbiased estimator of $L_t$ is
\begin{equation}
\widehat L_t(x_{1:t}, q; \Pi_{\mathrm{I}})
=
\frac{1}{J}
\sum_{j=1}^{J}
\prod_{s=t+1}^{T}
\psi_s\!\big((x_{1:t}, x_{t+1:s}^{(j)}), q\big).
\end{equation}

\setlength{\parskip}{0pt}
\paragraph{Lookahead estimates for Target II.}
For the power-tempered sequence target, $m_t=p(x_t\mid q,x_{<t})^{\alpha}$, so
\begin{equation}\label{eq:targetII:Lt}
L_t(x_{1:t},q;\Pi_{\mathrm{II}})
% = \sum_{x_{t+1:T}} \prod_{s=t+1}^{T} p(x_s\mid q,x_{<s})^{\alpha}\psi_s(x_{1:s},q)
= \mathbb{E}_{x_{t+1:T}\sim p(\cdot\mid q,x_{1:t})}\!\left[p(x_{t+1:T}\mid q,x_{1:t})^{\alpha-1} \!\!\prod_{s=t+1}^{T}\psi_s(x_{1:s},q)\right],
\end{equation}
where the expectation is under the untempered conditional base distribution. Therefore, if
$x_{t+1:T}^{(1)},\dots,x_{t+1:T}^{(J)} \sim p(\cdot\mid q,x_{1:t}),$
a Monte Carlo estimator is
\begin{equation} \label{eq:lookahead-targetII-estimate}
\widehat L_t(x_{1:t},q;\Pi_{\mathrm{II}})
=
\frac{1}{J}
\sum_{j=1}^{J}
p\!\left(x_{t+1:T}^{(j)} \mid q, x_{1:t}\right)^{\alpha-1}
\prod_{s=t+1}^{T}
\psi_s\!\big((x_{1:t}, x_{t+1:s}^{(j)}), q\big).
\end{equation}

The block-wise construction of prefix-only and lookahead intermediate targets is a straightforward generalization of the token-wise derivations, so we defer the details to Appendix~\ref{section:blockwise-smc} and \ref{sec:blockwise-mh}, which cover the SMC and MH derivations, respectively.

%%%%%%%%%%%%%%%%%%%%%%%%%%%%%%%%%%%%%%%%%%%%%%%%
%%%%%%%%%%%%%%%%%%%%%%%%%%%%%%%%%%%%%%%%%%%%%%%%
%%%%%%%%%%%%%%%%%%%%%%%%%%%%%%%%%%%%%%%%%%%%%%%%
%%%%%%%%%%%%%%%%%%%%%%%%%%%%%%%%%%%%%%%%%%%%%%%%

\subsection{Algorithm}

We adopt a general resample--move SMC framework for sampling from reward-augmented sequence distributions. Our key design choice is to \emph{decouple} the intermediate target used for SMC importance weighting, $\gamma_t^{\mathrm{SMC}}$, from the target used in the MH rejuvenation step, $\gamma_t^{\mathrm{MH}}$. This separation enables an efficient division of labor: SMC focuses on fast exploration, while MH provides targeted correction. 

In the SMC generation stage, we use Target I with \emph{prefix-only} intermediate targets, $\gamma_t^{\mathrm{SMC}} = \gamma_t^{\mathrm{prf}}(\cdot \mid q; \Pi_{\mathrm{I}})$, together with the tempered base proposal $\tilde p^{(\alpha)}$. Under this choice, the incremental weights depend only on the reward potentials $\psi_t$, eliminating dependence on the base model likelihood and yielding a computationally lightweight particle generation procedure. This stage efficiently produces candidate trajectories, from which high-reward ones are retained.

To correct for degeneracy, we selectively refine duplicated low-reward particles via an MH rejuvenation step. Here, we switch to Target II with \emph{lookahead} intermediate targets, $\gamma_t^{\mathrm{MH}} = \gamma_t^{\mathrm{look}}(\cdot \mid q; \Pi_{\mathrm{II}})$, which incorporate future-value information. The MH step is applied only to low-reward duplicates produced during resampling—where lookahead is most beneficial—and uses the lookahead term $L_k$ in the acceptance ratio to directly assess the quality of future continuations. This ensures that proposed updates are accepted only when they lead to genuinely better trajectories, rather than simply swapping one weak prefix for another.

The full procedure is summarized in Algorithm~\ref{alg:general-block-resample-move-smc}. Some of the notation used include the proposal sampling $h$ blocks of size $B$ from the tempered base $\tilde p^{(\tau)}$ given a prefix $x_{1:kB}$:
$ Q_{\tau}^{(h)}\!\left(x_{kB+1:(k+h)B} \mid q, x_{1:kB}\right)
=\prod_{t=kB+1}^{(k+h)B}
\tilde p^{(\tau)}\!\left(x_t \mid q, x_{<t}\right).$ 
To estimate the lookahead term $\widehat L_k^{(H)}(x_{1:kB}, q; \Pi_{\mathrm{II}})$ we first sample $J$ continuations of prefix $x_{1:kB}$ each of length $H\cdot B$ from the proposal
$
\tilde x_{kB+1:(k+H)B}^{(j)} \sim Q_{\tau_{\mathrm{roll}}}^{(H)}(\cdot \mid q, x_{1:kB})
$ 
and then compute the ubiased lookahead estimate following \eqref{eq:lookahead-targetII-estimate} as

\begin{equation} \label{eq:alg-lookahead-estimate}
\widehat L_k(x_{1:kB}, q; \Pi_{\mathrm{II}})
=
\frac{1}{J}
\sum_{j=1}^{J}
\prod_{t=kB+1}^{(k+H)B}
\frac{
p\!\left(\tilde x_t^{(j)} \mid q, \tilde x_{<t}^{(j)}\right)^{\alpha}
\psi_t\!\left(\tilde x_{1:t}^{(j)}, q\right)
}{
\tilde p^{(\tau_{\mathrm{roll}})}\!\left(\tilde x_t^{(j)} \mid q, \tilde x_{<t}^{(j)}\right)
},
\quad
\tilde x_{1:kB}^{(j)} = x_{1:kB}.
\end{equation}
For each duplicated particle \(x^{(i)}_{1:kB}\) with low reward, the MH step proposes a new particle \(x'_{1:kB}\) by resampling its final blocks. The lookahead terms are estimated for both \(x^{(i)}_{1:kB}\) and \(x'_{1:kB}\), and the proposal is accepted with probability given by Lemma~\ref{lemma:targetII:mh} in Appendix~\ref{sec:blockwise-mh}.

We found this combination to be computationally efficient while achieving strong performance in practice. In general, our framework supports four variants for $\gamma^{\mathrm{SMC}}$ and $\gamma^{\mathrm{MH}}$: Target I or Target II, each paired with either prefix-only or lookahead intermediate targets.

\begin{algorithm}[t]
\caption{General reward-guided block-wise resample--move SMC with selective MH rejuvenation}
\label{alg:general-block-resample-move-smc}
\begin{algorithmic}[1]

\State \textbf{Input:} prompt $q$, max sequence length $T$,  block size $B$, number of blocks $K=T/B$, power parameter $\alpha$, number of particles $N$, ESS threshold $\tau_{\mathrm{ESS}}$, reward threshold $\tau_{\mathrm{R}}$, number of MH steps $S$, number of lookahead samples $J$, and lookahead horizon in terms of num.~of blocks $H$, lookahead samples rollout temperature $1/\tau_{\mathrm{roll}}$

\State Initialize $x_{1:0}^{(i)}=\emptyset$ and $W_0^{(i)}=1/N$ for $i=1,\dots,N$

\For{$k=1,\dots,K$}
    \State \fbox{\textsc{SMC Generation}}
    \For{$i=1,\dots,N$}
        %\State Sample block $x_{(k-1)B+1:kB}^{(i)} \sim \prod_{t=(k-1)B+1}^{kB} \tilde p^{\alpha}(x_t^{(i)} \mid q, x_{<t}^{(i)})$
        \State Sample block $x_{(k-1)B+1:kB}^{(i)} \sim Q_{\alpha}^{(1)}( q, x_{1:(k-1)B}^{(i)})$
        \State Extend particle $x_{1:kB}^{(i)} = \bigl(x_{1:(k-1)B}^{(i)},\,x_{(k-1)B+1:kB}^{(i)}\bigr)$
        \State Compute incremental block weight $w_k^{(i)} = \prod_{t=(k-1)B+1}^{kB} \psi_t(x_{1:t}^{(i)},q)$
        \State Update unnormalized weight $W_k^{(i)} = W_{k-1}^{(i)} w_k^{(i)}$
    \EndFor

    \State Normalize $\{W_k^{(i)}\}_{i=1}^N$ to obtain $\{\widetilde W_k^{(i)}\}_{i=1}^N$ and compute $\mathrm{ESS}_k = \bigl(\sum_{i=1}^N (\widetilde W_k^{(i)})^2\bigr)^{-1}$

    \If{$\mathrm{ESS}_k < \tau_{\mathrm{ESS}}$}
        \State \fbox{\textsc{SMC Resample}}
        \State Resample particles according to $\{\widetilde W_k^{(i)}\}_{i=1}^N$ and reset weights to $W_k^{(i)}=1/N$

        \State \fbox{\textsc{MH Rejuvenation}}
        \State Identify duplicated particles $\mathcal{D}_k$ after resampling
        \For{$i \in \mathcal D_k$ such that $R(x_{1:kB}^{(i)},q) < \tau_{\mathrm{R}}$}
            \For{$s=1,\dots,S$}
                \State Propose a new block $z'\sim Q_{\alpha}^{(1)}(\cdot\mid q, x^{(i)}_{1:(k-1)B})$
                \State Form proposed particle $x_{1:kB}' = \bigl(x_{1:(k-1)B}^{(i)},\,z'\bigr)$
                \State Estimate $\widehat{L}_k(x^{(i)}_{1:kB}, q;\Pi_{\mathrm{II}})$ and $\widehat{L}_k(x_{1:kB}',q,\Pi_{\mathrm{II}})$ using \eqref{eq:alg-lookahead-estimate}
                \State 
Accept $x_{1:kB}'$ with probability $a_k^{\mathrm{look}}\!(x_{1:kB}^{(i)},x'_{1:kB})$ equal to
\[
\min\left\{
1,\;
\frac{\widehat{L}_k\left(x'_{1:kB},q;\Pi_{\mathrm{II}}\right)}{\widehat{L}_k(x_{1:kB}^{(i)},q;\Pi_{\mathrm{II}})}
\prod_{t=(k-1)B+1}^{kB}
\frac{
p(x_t'\mid q,x'_{<t})^{\alpha}\,
}{
p(x_t^{(i)}\mid q,x_{<t}^{(i)})^{\alpha}\,
}
\cdot 
\frac{\psi_t\left(x'_{1:t},q\right)}{\psi_t(x_{1:t}^{(i)},q)}
\cdot
\frac{
\tilde p^{(\alpha)}(x_t^{(i)}\mid q,x_{<t}^{(i)})
}{
\tilde p^{(\alpha)}\left(x_t'\mid q,x'_{<t}\right)
}
% \prod_{t=(k-1)B+1}^{kB}
% \frac{
% \psi_t(x'_{1:t},q)\, Z_t(x_{<t}')
% }{
% \psi_t(x_{1:t},q)\, Z_t(x_{<t})
% }
\right\}
\]
\State If accepted, set $x_{1:kB}^{(i)} \gets x_{1:kB}'$
            \EndFor
        \EndFor
    \EndIf
\EndFor

\State \textbf{Return:} weighted particle approximation $\{(x_{1:T}^{(i)}, W_K^{(i)})\}_{i=1}^N$

\end{algorithmic}
\end{algorithm}

\section{Experiments \& Results}
\subsection{Setup}
\paragraph{Implementation Details.}
We implement all methods in a unified inference-time sampling framework built on \texttt{vLLM}.  Following the~\cite{ji2026scalable} all benchmark experiment settings, we cap the maximum generation length at $T=3072$ tokens and terminate early upon emitting an \texttt{EOS} token. Our method (\textbf{SMC reward-guided lookahead}) combines reward-guided SMC (prefix potentials only) with a resample-move Metropolis--Hastings rejuvenation step whose acceptance ratio uses rollout-based lookahead with estimated lookahead $L_t$. Concretely, SMC iteratively extends a population of partial solutions, reweights particles using a task-specific verifier signal, and periodically resamples and rejuvenates particles to maintain diversity while concentrating probability mass on high-quality trajectories. The details are in Algorithm \ref{alg:general-block-resample-move-smc}.
\setlength{\parskip}{0pt}
\paragraph{Task-specific hyperparameters and reward signals.} 
We apply a unified method across all tasks, keeping most hyperparameters constant while adapting the reward verifier, block size ($B$), and rollout temperature ($\tau_{\mathrm{roll}}$) to best capture the contextual needs of each domain. Globally, we default to $\alpha=4.0$, $N=16$, $J=2$ lookahead rollouts, and $S=2$ Metropolis-Hastings rejuvenation steps per resampling stage. For \textbf{MATH500} and \textbf{GPQA}, we set a larger $B=512$ and $\tau_{\mathrm{roll}}=0.3$. This coarser granularity reflects the nature of mathematical and scientific reasoning, which often requires longer, continuous chains of thought before Process Reward Models (\texttt{Act-X~\cite{duan2025efficient}} and \texttt{ThinkPRM-1.5B}, respectively) can assign meaningful reward signals. In contrast, for \textbf{HumanEval}, we use a much finer-grained $B=64$ and $\tau_{\mathrm{roll}}=0.1$. Because code generation can be evaluated in smaller logical chunks, this frequent resampling fully leverages our method's capacity for early course correction, utilizing unit-test execution (5s timeout) alongside an auxiliary syntax reward (weight 0.3).
\setlength{\parskip}{0pt}
\paragraph{Base Models and Benchmarks.}
We evaluate our framework on three base language models: \texttt{Qwen2.5-7B}, \texttt{Qwen2.5-Math-7B}, and \texttt{DeepSeek-Math-7B}. To comprehensively assess performance across diverse reasoning domains, we consider three benchmarks: \textbf{MATH500} (measuring final-answer exact match), \textbf{HumanEval} (evaluating functional correctness via standard pass@1 execution against official Python unit tests), and the \textbf{GPQA} diamond split (reporting multiple-choice accuracy).
\setlength{\parskip}{0pt}
\paragraph{Baselines.}
We keep the baseline suite aligned with prior work on power sampling following~\cite{ji2026scalable}. Concretely, we compare against \textbf{Base} (standard decoding with temperature set to one), \textbf{Low-temperature} sampling (with temperature $1/\alpha$), \textbf{Best-of-N}, \textbf{MCMC Power Sampling}~\citep{karan2025reasoning}, \textbf{Scalable Power Sampling}~\citep{ji2026scalable}, and \textbf{Power-SMC}~\citep{azizi2026power}. In addition, we include an SMC-style reward-guided baseline \textbf{SMC (reward)}, implemented following \citep{lew2023sequential}, which uses prefix-only intermediate targets with the same reward verifiers and hyperparameters $(N, B, \alpha)$ as our full method, but omits lookahead and MH rejuvenation. This baseline serves as a direct ablation isolating the contribution of our lookahead and resample--move components. We also report \textbf{GRPO} when available.
\setlength{\parskip}{0pt}
\paragraph{Results.}
Table~\ref{tab:main_results} reports the pass@1 accuracies on MATH500, HumanEval, and GPQA. Our proposed SMC reward-guided lookahead consistently sets a new state-of-the-art for inference-time decoding across all three 7B models. On HumanEval, our method reaches peak pass@1 scores of 0.878 (Qwen2.5-7B), 0.854 (Qwen2.5-Math-7B), and 0.781 (DeepSeek-Math-7B), yielding massive absolute improvements of up to +54.9\% over base models and substantially outperforming recent strong baselines like Scalable Power Sampling~\cite{ji2026scalable}. In mathematical reasoning (MATH500), our approach attains 0.790 and 0.604 on Qwen2.5-7B and DeepSeek-Math-7B, respectively, effectively surpassing even domain-specific RLHF post-training methods such as GRPO. We observe similarly robust gains on the GPQA scientific reasoning benchmarks, achieving up to 0.424 pass@1.
\setlength{\parskip}{0pt}
\paragraph{Effect of the lookahead mechanism (Ablation).} Crucially, comparing our full method to the prefix-only SMC (reward)~\cite{lew2023sequential} explicitly isolates the impact of our lookahead formulation. Across the board, incorporating intermediate lookahead targets yields striking performance leaps. On code generation (HumanEval), the lookahead mechanism drives absolute gains of +9.1\%, +10.4\%, and +15.3\% over standard SMC with same reward model for Qwen2.5-7B, Qwen2.5-Math-7B, and DeepSeek-Math-7B, respectively. We observe a parallel trend on MATH500, with lookahead providing up to an +8.8\% absolute boost (DeepSeek-Math-7B) over the prefix-only variant. This ablation highlights a fundamental insight: while prefix potentials improve upon base models, they frequently lead the sampler into local optima during long-horizon generation. By estimating the future reward landscape through intermediate marginals, the lookahead mechanism successfully prevents myopic sampling, steering the model toward globally optimal sequence trajectories.

\begin{wrapfigure}{r}{0.5\columnwidth}
\vspace{-1.2em}
\centering
\includegraphics[width=0.48\columnwidth]{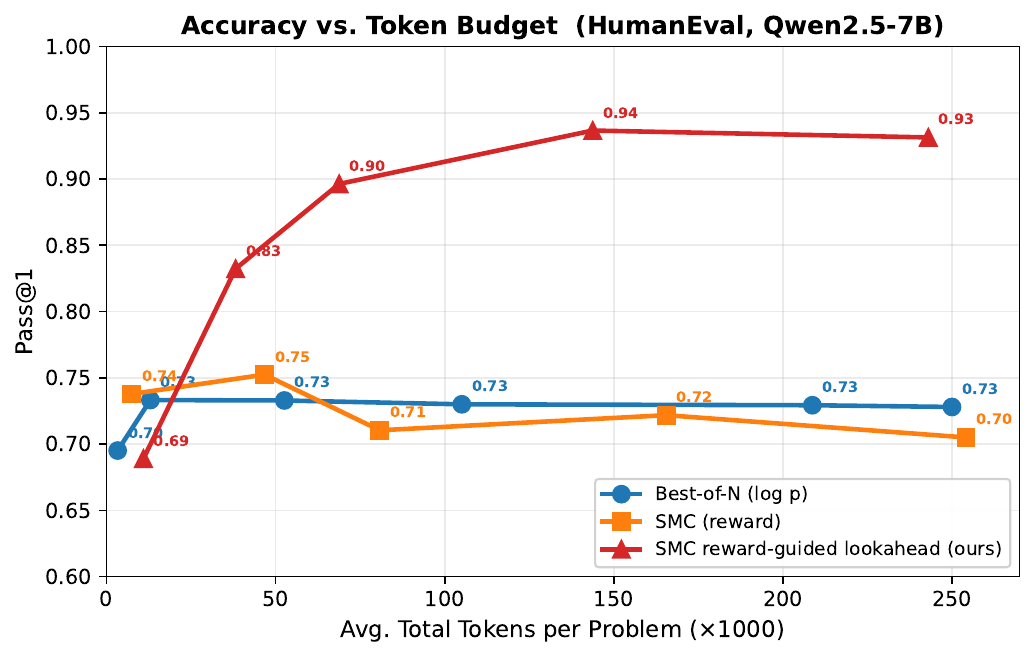}
\caption{Pass@1 vs.\ total tokens per problem on a subset of 82 HumanEval tasks (Qwen2.5-7B). Best-of-N and SMC~(reward) saturate early. Our method scales monotonically with compute.}
\label{fig:token_tradeoff}
\vspace{-1em}
\end{wrapfigure}
\newpage
\paragraph{Token Budget vs.\ Accuracy.}
A natural question is whether our gains arise simply from generating more tokens. Figure~\ref{fig:token_tradeoff} addresses this by sweeping the number of particles (or samples for Best-of-$N$) from $2$ to $128$ on HumanEval. Best-of-$N$ (ranked by $\log p(x\!\mid\!q)$) and SMC~(reward) both plateau early at $0.73$ pass@1, with no improvement beyond $N{=}16$, consistent with diminishing returns from independent sampling. In contrast, our method continues to improve, reaching $0.94$ pass@1 at ${\sim}144\text{K}$ tokens per problem. This demonstrates that the gains are not explained by higher token consumption, but by how compute is allocated: the lookahead $\zeta$ term is invoked selectively during resampling, and additional budget is directed toward lookahead estimation and MH rejuvenation of low-reward particles, yielding a qualitatively different scaling behavior.

\begin{table}[t]
\centering
\small
\setlength{\tabcolsep}{10pt}
\renewcommand{\arraystretch}{1.15}
\resizebox{0.95\columnwidth}{!}{%
\begin{tabular}{lccc}
\toprule
 & \textbf{MATH500} & \textbf{HumanEval} & \textbf{GPQA} \\
\midrule
\multicolumn{4}{l}{\textbf{Qwen2.5-7B}}\\
\hspace{1.5em}Base & 0.498 & 0.329 & 0.278 \\
\hspace{1.5em}Low-temperature & 0.628 & 0.524 & 0.303 \\
\hspace{1.5em}Best-of-N & 0.650 & 0.609 & 0.282 \\
\hspace{1.5em}SMC (reward)~\citep{lew2023sequential} & 0.710 & 0.787 & 0.323 \\
\hspace{1.5em}MCMC Power Sampling~\citep{karan2025reasoning} & 0.706 & 0.622 & 0.318 \\
\hspace{1.5em}Scalable Power Sampling~\citep{ji2026scalable} & 0.708 & 0.756 & 0.349 \\
\hspace{1.5em}Power-SMC~\citep{azizi2026power} & 0.716 & 0.683 & 0.313 \\
% \hspace{1.5em}\textbf{SMC reward-guided lookahead (ours)} & \textbf{0.784} & \textbf{0.878} & \textbf{0.384} \\
\hspace{1.5em}\textbf{SMC reward-guided lookahead (ours)} & \textbf{0.790} & \textbf{0.878} & \textbf{0.384} \\
\midrule
\hspace{1.5em}GRPO (MATH) & 0.740 & 0.561 & 0.354 \\
\midrule
\multicolumn{4}{l}{\textbf{Qwen2.5-Math-7B}}\\
\hspace{1.5em}Base & 0.496 & 0.329 & 0.278 \\
\hspace{1.5em}Low-temperature & 0.690 & 0.512 & 0.353 \\
\hspace{1.5em}Best-of-N & 0.684 & 0.512 & 0.343 \\
\hspace{1.5em}SMC (reward)~\citep{lew2023sequential} & 0.756 & 0.750 & 0.384 \\
\hspace{1.5em}MCMC Power Sampling~\citep{karan2025reasoning} & 0.748 & 0.573 & 0.389 \\
\hspace{1.5em}Scalable Power Sampling~\citep{ji2026scalable} & 0.758 & 0.604 & 0.409 \\
\hspace{1.5em}Power-SMC~\citep{azizi2026power} & 0.742 & 0.585 & 0.349 \\
% \hspace{1.5em}\textbf{SMC reward-guided lookahead (ours)} & \textbf{0.778} & \textbf{0.854} & \textbf{0.424} \\
\hspace{1.5em}\textbf{SMC reward-guided lookahead (ours)} & \textbf{0.782} & \textbf{0.854} & \textbf{0.424} \\
\midrule
\hspace{1.5em}GRPO (MATH) & 0.785 & 0.537 & 0.399 \\
\midrule
\multicolumn{4}{l}{\textbf{DeepSeek-Math-7B}}\\
\hspace{1.5em}Base & 0.362 & 0.415 & 0.333 \\
\hspace{1.5em}Low-temperature & 0.366 & 0.427 & \textbf{0.430} \\
\hspace{1.5em}Best-of-N & 0.420 & 0.433 & 0.338 \\
\hspace{1.5em}SMC (reward)~\citep{lew2023sequential} & 0.516 & 0.628 & 0.388 \\
\hspace{1.5em}MCMC Power Sampling~\citep{karan2025reasoning} & 0.424 & 0.470 & 0.345 \\
\hspace{1.5em}Scalable Power Sampling~\citep{ji2026scalable} & 0.464 & 0.487 & 0.364 \\
\hspace{1.5em}Power-SMC~\citep{azizi2026power} & 0.456 & 0.475 & 0.326 \\
% \hspace{1.5em}\textbf{SMC reward-guided lookahead (ours)} & \textbf{0.604} & \textbf{0.781} & 0.424 \\
\hspace{1.5em}\textbf{SMC reward-guided lookahead (ours)} & \textbf{0.604} & \textbf{0.781} & 0.424 \\
\midrule
\hspace{1.5em}GRPO (MATH) & 0.492 & 0.524 & 0.333 \\
\bottomrule
\end{tabular}
}
\caption{Main results (pass@1) on MATH500, HumanEval, and GPQA (diamond split). Baseline numbers follow prior work~\cite{ji2026scalable}; we add our method as an extra row.}
\label{tab:main_results}
\end{table}

\section{Conclusion and Future Work}

We introduced a principled probabilistic framework for reward-guided decoding in large language models by defining reward-augmented sequence distributions within a Feynman--Kac formulation. Building on this perspective, we developed Sequential Monte Carlo methods with both prefix-only and lookahead intermediate targets, showing that lookahead yields optimal weighting by matching exact marginals. Our resample--move framework further combines efficient exploration with targeted Metropolis--Hastings corrections, enabling practical and scalable inference. Empirically, the proposed approach consistently outperforms existing decoding and sampling baselines across diverse reasoning tasks, highlighting the importance of sequence-level modeling and principled sampling.

Looking ahead, several directions merit further exploration. A key priority is improving the efficiency of lookahead estimation. One promising approach is adaptive compute allocation, where rollout budget is distributed based on particle uncertainty, with early stopping when decisions are clear and increased compute devoted to high-impact, ambiguous particles. Finally, our method is naturally suited for generating high-quality samples, making it a compelling tool for data generation in reinforcement learning for reasoning tasks, as well as more complex agentic applications.

\bibliography{colm2026_conference}

@article{ji2026scalable,
  title={Scalable Power Sampling: Unlocking Efficient, Training-Free Reasoning for LLMs via Distribution Sharpening},
  author={Ji, Xiaotong and Tutunov, Rasul and Zimmer, Matthieu and Ammar, Haitham Bou},
  journal={arXiv preprint arXiv:2601.21590},
  year={2026}
}

@article{azizi2026power,
  title={Power-SMC: Low-Latency Sequence-Level Power Sampling for Training-Free LLM Reasoning},
  author={Azizi, Seyedarmin and Potraghloo, Erfan Baghaei and Ahmadi, Minoo and Kundu, Souvik and Pedram, Massoud},
  journal={arXiv preprint arXiv:2602.10273},
  year={2026}
}

@article{karan2025reasoning,
  title={Reasoning with sampling: Your base model is smarter than you think},
  author={Karan, Aayush and Du, Yilun},
  journal={arXiv preprint arXiv:2510.14901},
  year={2025}
}

@article{lightman2023step,
  title={Let's Verify Step by Step},
  author={Lightman, Hunter and Kosaraju, Vineet and Burda, Yura and Edwards, Harri and Baker, Bowen and Lee, Teddy and Leike, Jan and Schulman, John and Sutskever, Ilya and Cobbe, Karl},
  journal={arXiv preprint arXiv:2305.20050},
  year={2023},
}

@article{ouyang2022training,
  title={Training language models to follow instructions with human feedback},
  author={Ouyang, Long and Wu, Jeffrey and Jiang, Xu and Almeida, Diogo and Wainwright, Carroll and Mishkin, Pamela and Zhang, Chong and Agarwal, Sandhini and Slama, Katarina and Ray, Alex and others},
  journal={Advances in neural information processing systems},
  volume={35},
  pages={27730--27744},
  year={2022}
}

@inproceedings{holtzman2020nucleus,
  title={The Curious Case of Neural Text Degeneration},
  author={Holtzman, Ari and Buys, Jan and Du, Li and Forbes, Maxwell and Choi, Yejin},
  booktitle={International Conference on Learning Representations (ICLR)},
  year={2020}
}

@article{vijayakumar2016diverse,
  title={Diverse beam search: Decoding diverse solutions from neural sequence models},
  author={Vijayakumar, Ashwin K and Cogswell, Michael and Selvaraju, Ramprasath R and Sun, Qing and Lee, Stefan and Crandall, David and Batra, Dhruv},
  journal={arXiv preprint arXiv:1610.02424},
  year={2016}
}

@article{cobbe2021training,
  title={Training verifiers to solve math word problems},
  author={Cobbe, Karl and Kosaraju, Vineet and Bavarian, Mohammad and Chen, Mark and Jun, Heewoo and Kaiser, Lukasz and Plappert, Matthias and Tworek, Jerry and Hilton, Jacob and Nakano, Reiichiro and others},
  journal={arXiv preprint arXiv:2110.14168},
  year={2021}
}

@article{lipkin2025fast,
  title={Fast controlled generation from language models with adaptive weighted rejection sampling},
  author={Lipkin, Benjamin and LeBrun, Benjamin and Vigly, Jacob Hoover and Loula, Jo{\~a}o and MacIver, David R and Du, Li and Eisner, Jason and Cotterell, Ryan and Mansinghka, Vikash and O'Donnell, Timothy J and others},
  journal={arXiv preprint arXiv:2504.05410},
  year={2025}
}

@article{lew2023sequential,
  title={Sequential monte carlo steering of large language models using probabilistic programs},
  author={Lew, Alexander K and Zhi-Xuan, Tan and Grand, Gabriel and Mansinghka, Vikash K},
  journal={arXiv preprint arXiv:2306.03081},
  year={2023}
}

@article{loula2025syntactic,
  title={Syntactic and semantic control of large language models via sequential monte carlo},
  author={Loula, Jo{\~a}o and LeBrun, Benjamin and Du, Li and Lipkin, Ben and Pasti, Clemente and Grand, Gabriel and Liu, Tianyu and Emara, Yahya and Freedman, Marjorie and Eisner, Jason and others},
  journal={arXiv preprint arXiv:2504.13139},
  year={2025}
}

@article{uesato2022solving,
  title={Solving math word problems with process-and outcome-based feedback},
  author={Uesato, Jonathan and Kushman, Nate and Kumar, Ramana and Song, Francis and Siegel, Noah and Wang, Lisa and Creswell, Antonia and Irving, Geoffrey and Higgins, Irina},
  journal={arXiv preprint arXiv:2211.14275},
  year={2022}
}

@article{bai2022constitutional,
  title={Constitutional ai: Harmlessness from ai feedback},
  author={Bai, Yuntao and Kadavath, Saurav and Kundu, Sandipan and Askell, Amanda and Kernion, Jackson and Jones, Andy and Chen, Anna and Goldie, Anna and Mirhoseini, Azalia and McKinnon, Cameron and others},
  journal={arXiv preprint arXiv:2212.08073},
  year={2022}
}

@article{chow2024bon,
  title={Inference-aware fine-tuning for best-of-n sampling in large language models},
  author={Chow, Yinlam and Tennenholtz, Guy and Gur, Izzeddin and Zhuang, Vincent and Dai, Bo and Thiagarajan, Sridhar and Boutilier, Craig and Agarwal, Rishabh and Kumar, Aviral and Faust, Aleksandra},
  journal={arXiv preprint arXiv:2412.15287},
  year={2024}
}

@article{setlur2024processverifiers,
  title={Rewarding progress: Scaling automated process verifiers for llm reasoning},
  author={Setlur, Amrith and Nagpal, Chirag and Fisch, Adam and Geng, Xinyang and Eisenstein, Jacob and Agarwal, Rishabh and Agarwal, Alekh and Berant, Jonathan and Kumar, Aviral},
  journal={arXiv preprint arXiv:2410.08146},
  year={2024}
}

@article{ibrahim2000power,
  title={Power prior distributions for regression models},
  author={Ibrahim, Joseph G and Chen, Ming-Hui},
  journal={Statistical Science},
  pages={46--60},
  year={2000},
  publisher={JSTOR}
}

@misc{radford2018improving,
  title={Improving language understanding by generative pre-training},
  author={Radford, Alec and Narasimhan, Karthik and Salimans, Tim and Sutskever, Ilya and others},
  year={2018},
  publisher={San Francisco, CA, USA},
  note={OpenAI technical report}
}

@article{vaswani2017attention,
  title={Attention is all you need},
  author={Vaswani, Ashish and Shazeer, Noam and Parmar, Niki and Uszkoreit, Jakob and Jones, Llion and Gomez, Aidan N and Kaiser, {\L}ukasz and Polosukhin, Illia},
  journal={Advances in neural information processing systems},
  volume={30},
  year={2017}
}

@inproceedings{zhang2025lessons,
  title={The lessons of developing process reward models in mathematical reasoning},
  author={Zhang, Zhenru and Zheng, Chujie and Wu, Yangzhen and Zhang, Beichen and Lin, Runji and Yu, Bowen and Liu, Dayiheng and Zhou, Jingren and Lin, Junyang},
  booktitle={Findings of the Association for Computational Linguistics: ACL 2025},
  pages={10495--10516},
  year={2025}
}

@article{gordon1993novel,
  title={Novel approach to nonlinear/non-Gaussian Bayesian state estimation},
  author={Gordon, Neil J and Salmond, David J and Smith, Adrian FM},
  journal={IEE Proceedings F (Radar and Signal Processing)},
  volume={140},
  number={2},
  pages={107--113},
  year={1993}
}

@book{doucet2001sequential,
  title={Sequential Monte Carlo Methods in Practice},
  author={Doucet, Arnaud and de Freitas, Nando and Gordon, Neil},
  publisher={Springer},
  year={2001}
}

@article{del2006sequential,
  title={Sequential monte carlo samplers},
  author={Del Moral, Pierre and Doucet, Arnaud and Jasra, Ajay},
  journal={Journal of the Royal Statistical Society Series B: Statistical Methodology},
  volume={68},
  number={3},
  pages={411--436},
  year={2006},
  publisher={Oxford University Press}
}

@article{chopin2004central,
  title={Central limit theorem for sequential Monte Carlo methods and its application to Bayesian inference},
  author={Chopin, Nicolas},
  journal={The Annals of Statistics},
  volume={32},
  number={6},
  pages={2385--2411},
  year={2004}
}

@incollection{doucet2011tutorial,
  title={A Tutorial on Particle Filtering and Smoothing: Fifteen Years Later},
  author={Doucet, Arnaud and Johansen, Adam M.},
  booktitle={The Oxford Handbook of Nonlinear Filtering},
  editor={Crisan, Dan and Rozovskii, Boris},
  pages={656--704},
  year={2011},
  publisher={Oxford University Press}
}

@article{wang2020contextual,
  title={Contextual temperature for language modeling},
  author={Wang, Pei-Hsin and Hsieh, Sheng-Iou and Chang, Shih-Chieh and Chen, Yu-Ting and Pan, Jia-Yu and Wei, Wei and Juan, Da-Chang},
  journal={arXiv preprint arXiv:2012.13575},
  year={2020}
}

@article{brown2020language,
  title={Language models are few-shot learners},
  author={Brown, Tom and Mann, Benjamin and Ryder, Nick and Subbiah, Melanie and Kaplan, Jared D and Dhariwal, Prafulla and Neelakantan, Arvind and Shyam, Pranav and Sastry, Girish and Askell, Amanda and others},
  journal={Advances in neural information processing systems},
  volume={33},
  pages={1877--1901},
  year={2020}
}

@article{huang2025best,
  title={Is best-of-n the best of them? coverage, scaling, and optimality in inference-time alignment},
  author={Huang, Audrey and Block, Adam and Liu, Qinghua and Jiang, Nan and Krishnamurthy, Akshay and Foster, Dylan J},
  journal={arXiv preprint arXiv:2503.21878},
  year={2025}
}

@article{neal2001annealed,
  title={Annealed importance sampling},
  author={Neal, Radford M},
  journal={Statistics and computing},
  volume={11},
  number={2},
  pages={125--139},
  year={2001},
  publisher={Springer}
}

@article{gilks2001following,
  title={Following a moving target—Monte Carlo inference for dynamic Bayesian models},
  author={Gilks, Walter R and Berzuini, Carlo},
  journal={Journal of the Royal Statistical Society: Series B},
  volume={63},
  number={1},
  pages={127--146},
  year={2001}
}

@article{snell2024scaling,
  title={Scaling llm test-time compute optimally can be more effective than scaling model parameters},
  author={Snell, Charlie and Lee, Jaehoon and Xu, Kelvin and Kumar, Aviral},
  journal={arXiv preprint arXiv:2408.03314},
  year={2024}
}

@article{rafailov2023direct,
  title={Direct preference optimization: Your language model is secretly a reward model},
  author={Rafailov, Rafael and Sharma, Archit and Mitchell, Eric and Manning, Christopher D and Ermon, Stefano and Finn, Chelsea},
  journal={Advances in neural information processing systems},
  volume={36},
  pages={53728--53741},
  year={2023}
}

@book{moral2004feynman,
  title={Feynman-Kac formulae: genealogical and interacting particle systems with applications},
  author={Moral, Pierre},
  year={2004},
  publisher={Springer}
}

@article{whiteley2012twisted,
  author    = {Nick Whiteley and Anthony Lee},
  title     = {Twisted Particle Filters},
  journal   = {Annals of Statistics},
  volume    = {42},
  number    = {1},
  pages     = {115--141},
  year      = {2014},
  doi       = {10.1214/13-AOS1167}
}

@article{zhao2024probabilistic,
  title={Probabilistic inference in language models via twisted sequential monte carlo},
  author={Zhao, Stephen and Brekelmans, Rob and Makhzani, Alireza and Grosse, Roger},
  journal={arXiv preprint arXiv:2404.17546},
  year={2024}
}

@article{park2024grammar,
  title={Grammar-aligned decoding},
  author={Park, Kanghee and Wang, Jiayu and Berg-Kirkpatrick, Taylor and Polikarpova, Nadia and D'Antoni, Loris},
  journal={Advances in Neural Information Processing Systems},
  volume={37},
  pages={24547--24568},
  year={2024}
}

@article{duan2025efficient,
  title={Efficient process reward model training via active learning},
  author={Duan, Keyu and Liu, Zichen and Mao, Xin and Pang, Tianyu and Chen, Changyu and Chen, Qiguang and Shieh, Michael Qizhe and Dou, Longxu},
  journal={arXiv preprint arXiv:2504.10559},
  year={2025}
}
\bibliographystyle{colm2026_conference}

\appendix

\section{Additional Methodological Details}

\subsection{Conditional Distributions} \label{sec:conditional:distributions}

The following lemma gives the exact conditional next-token distribution induced by the unified target $\Pi$ in \eqref{eq:unified-full-target}. The exact conditional distributions for Target I and Target II then follow by substituting the corresponding choice of transition factor $m_t$.

\begin{lemma}[Conditional next-token distribution under the unified target]
Consider the unified target $\Pi$ from \eqref{eq:unified-full-target}. Then the conditional distribution of the next token given a prefix $x_{<t}$ is
\begin{equation}
\Pi(x_t\mid q,x_{<t})
=
\frac{
m_t(x_t\mid q,x_{<t})
\psi_t(x_{1:t},q)
L_t(x_{1:t},q;\Pi)
}{
\sum_{v\in \mathcal{V}}
m_t(v\mid q,x_{<t})
\psi_t(x_{<t},v,q)
L_t(x_{<t},v,q;\Pi)
},
\end{equation}
where the lookahead term $L_t$ is defined in \eqref{eq:generic-lookahead-term-lemma} and $\mathcal{V}$ is the whole token vocabulary.
\end{lemma}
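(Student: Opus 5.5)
The plan is to write the conditional as a ratio of marginals and then use the lookahead factorization \eqref{eq:generic-lookahead-factorization-lemma} to cancel the shared prefix factors. By definition of conditional probability under $\Pi$,
\[
\Pi(x_t\mid q,x_{<t}) = \frac{\Pi(x_{1:t}\mid q)}{\Pi(x_{1:t-1}\mid q)} = \frac{\gamma_t^{\mathrm{look}}(x_{1:t}\mid q;\Pi)}{\sum_{v\in\mathcal V}\gamma_t^{\mathrm{look}}((x_{<t},v)\mid q;\Pi)}.
\]
Here $\gamma_t^{\mathrm{look}}$ is the unnormalized marginal \eqref{eq:generic-lookahead-prefix-lemma} of the unnormalized target \eqref{eq:unified-full-target}. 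The global normalizing constant of $\Pi$ therefore cancels between numerator and denominator. The denominator is the marginal of $x_{1:t-1}$, obtained by summing the time-$t$ marginal over the last token $v$; equivalently, it equals $\gamma_{t-1}^{\mathrm{look}}(x_{1:t-1}\mid q;\Pi)$.

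Next I substitute the factorization from the lookahead lemma,
\[
\gamma_t^{\mathrm{look}}(x_{1:t}\mid q;\Pi)=\Big[\prod_{s=1}^{t-1} m_s\,\psi_s\Big]\, m_t(x_t\mid q,x_{<t})\,\psi_t(x_{1:t},q)\,L_t(x_{1:t},q;\Pi),
\]
into both numerator and denominator. The bracketed product over $s\le t-1$ depends only on $x_{<t}$, so it is constant across the sum over $v$ and can be pulled out. After cancelling it, what remains is exactly the claimed expression: numerator $m_t\psi_t L_t$ evaluated at $x_t$, and denominator the same quantity summed over $v\in\mathcal V$.

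There is no real obstacle; the only points needing care are well-definedness conditions. The prefix must have positive probability, $\Pi(x_{1:t-1}\mid q)>0$. That positivity, together with the nonzero common prefix factor, is what justifies the cancellation. I would also state the conventions $L_T\equiv 1$ (the sum over an empty future) and $t\le T$. For $t=T$ the formula then reduces to $m_T\psi_T$ normalized over the vocabulary, which serves as a consistency check. Finally, I would note that instantiating $m_t$ as in Targets~I and~II gives their conditionals directly.
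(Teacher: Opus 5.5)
Your proposal is correct and takes essentially the same route as the paper: write the conditional as a ratio of marginals of the unnormalized target, cancel the factors that depend only on $x_{<t}$, and identify the remaining future sum as $L_t$. The only difference is that you cite the lookahead factorization \eqref{eq:generic-lookahead-factorization-lemma} as a ready-made step, whereas the paper carries out that factorization inline; your remarks on positivity of the prefix and on the convention $L_T\equiv 1$ are sensible additions.
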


\begin{proof}
By definition,
\begin{equation}
\Pi(x_t\mid q,x_{<t})
=
\frac{
\sum_{x_{t+1:T}} \Pi(x_{1:T}\mid q)
}{
\sum_{v\in \mathcal{V}}\sum_{x_{t+1:T}} \Pi(x_{<t},v,x_{t+1:T}\mid q)
}.
\end{equation}
Substituting the expression for $\Pi$ gives
\begin{align}
\Pi(x_t\mid q,x_{<t})
&=
\frac{
\sum_{x_{t+1:T}}
\prod_{s=1}^{T} m_s(x_s\mid q,x_{<s})
\prod_{s=1}^{T} \psi_s(x_{1:s},q)
}{
\sum_{v\in \mathcal{V}}\sum_{x_{t+1:T}}
\prod_{s=1}^{T} m_s(x_s\mid q,x_{<s})
\prod_{s=1}^{T} \psi_s(x_{1:s},q)
}.
\end{align}
All factors depending only on the prefix $x_{<t}$ cancel in the ratio. Hence
\begin{align}
\Pi(x_t\mid q,x_{<t})
&=
\frac{
m_t(x_t\mid q,x_{<t})
\psi_t(x_{1:t},q)
\sum_{x_{t+1:T}}
\prod_{s=t+1}^{T} \left(m_s(x_s\mid q,x_{<s})\psi_s(x_{1:s},q)\right)
}{
\sum_{v\in \mathcal{V}}
m_t(v\mid q,x_{<t})
\psi_t(x_{<t},v,q)
\sum_{x_{t+1:T}}
\prod_{s=t+1}^{T} \left(m_s(x_s\mid q,x_{<s}) \psi_s(x_{1:s},q)\right)
}.
\end{align}
Plugging in the definition of $L_t$ above yields the stated formula.
\end{proof}

\paragraph{Conditional Distribution for Target I.}
For Target I, $m_t(x_t\mid q,x_{<t}) = \tilde p_t^{(\alpha)}(x_t\mid q,x_{<t}).$ Therefore,
\begin{equation}
\Pi_{\mathrm{I}}(x_t\mid q,x_{<t})
=
\frac{
\tilde p_t^{(\alpha)}(x_t\mid q,x_{<t})
\psi_t(x_{1:t},q)
L_t(x_{1:t},q;\Pi_{\mathrm{I}})
}{
\sum_{v\in \mathcal{V}}
\tilde p_t^{(\alpha)}(v\mid q,x_{<t})
\psi_t(x_{<t},v,q)
L_t(x_{<t},v,q;\Pi_{\mathrm{I}})
},
\end{equation}
where $L_t$ term is defined in \eqref{eq:targetI:Lt}.

\paragraph{Conditional Distribution for Target II.}
For Target II, $m_t(x_t\mid q,x_{<t})=p(x_t\mid q,x_{<t})^{\alpha}.$
Therefore,
\begin{equation}
\Pi_{\mathrm{II}}(x_t\mid q,x_{<t})
=
\frac{
p(x_t\mid q,x_{<t})^{\alpha}
\psi_t(x_{1:t},q)
L_t(x_{1:t},q;\Pi_{\mathrm{II}})
}{
\sum_{v\in \mathcal{V}}
p(v\mid q,x_{<t})^{\alpha}
\psi_t(x_{<t},v,q)
L_t(x_{<t},v,q;\Pi_{\mathrm{II}})
},
\end{equation}
where $L_t$ term is defined in \eqref{eq:targetII:Lt}.

%%%%%%%%%%%%%%%%%%%%%%%%%%%%%%%%%%%%%%%%%%%%%%%%%%%%%%%%
%%%%%%%%%%%%%%%%%%%%%%%%%%%%%%%%%%%%%%%%%%%%%%%%%%%%%%%%
%%%%%%%%%%%%%%%%%%%%%%%%%%%%%%%%%%%%%%%%%%%%%%%%%%%%%%%%
%%%%%%%%%%%%%%%%%%%%%%%%%%%%%%%%%%%%%%%%%%%%%%%%%%%%%%%%

\subsection{SMC for Targets I and II} \label{sec:instantiation}

\paragraph{Prefix-only SMC weights}
Let $r_t(x_t\mid q,x_{<t})$ be the proposal distribution used to sample the next token. Then the incremental importance weight is
\begin{equation}
\label{eq:generic-simple-weight}
w_t^{\mathrm{prf}}(x_{1:t}, q; \Pi)
=
\frac{m_t(x_t\mid q,x_{<t})}
{r_t(x_t\mid q,x_{<t})}
\psi_t(x_{1:t},q).
\end{equation}

\paragraph{Prefix-only SMC instantiation for Target I.}

If $r_t(x_t\mid q,x_{<t})=\tilde p_t^{(\beta)}(x_t\mid q,x_{<t})$,
then
\begin{equation}
w_t^{\mathrm{prf}}(x_{1:t}; \Pi_{\mathrm{I}})
=\frac{\tilde p_t^{(\alpha)}(x_t\mid q,x_{<t})}
{\tilde p_t^{(\beta)}(x_t\mid q,x_{<t})}
\psi_t(x_{1:t},q).
\end{equation}
In particular, when $\beta=\alpha$,
$w_t^{\mathrm{prf}}(x_{1:t}; \Pi_{\mathrm{I}})=\psi_t(x_{1:t},q)$ so the weight updates depend only on the reward potentials and not the base model.

\paragraph{Prefix-only SMC instantiation for Target II.} 
If the proposal is $\tilde p_t^{(\beta)}(\cdot\mid q,x_{<t})$, then
\begin{equation}
w_t^{\mathrm{prf}}(x_{1:t}; \Pi_{\mathrm{II}})
=
\frac{p(x_t\mid q,x_{<t})^{\alpha}}
{\tilde p_t^{(\beta)}(x_t\mid q,x_{<t})}
\psi_t(x_{1:t},q).
\end{equation}
In particular, for $\alpha=\beta$, this becomes $w_t^{\mathrm{prf}}(x_{1:t}; \Pi_{\mathrm{II}})=Z_t(x_{<t})\psi_t(x_{1:t},q)$, where $Z_t(x_{<t}) = \sum_{v\in\mathcal{V}} p(v\mid q, x_{<t})^{\alpha}$, so the SMC weights in this case depend on the base model as well as the reward potentials.

\paragraph{Lookahead SMC weights.}
Let $r_t(x_t\mid q,x_{<t})$ be a proposal distribution. Then the incremental importance weight is
\begin{equation}
\label{eq:generic-lookahead-weight-lemma}
w_t^{\mathrm{look}}(x_{1:t}, q; \Pi)
=
\frac{m_t(x_t\mid q,x_{<t})}
{r_t(x_t\mid q,x_{<t})}
\psi_t(x_{1:t},q)
\frac{L_t(x_{1:t},q; \Pi)}{L_{t-1}(x_{1:t-1},q; \Pi)}.
\end{equation}

\paragraph{Lookahead SMC instantiation for Target I}

If the proposal is $r_t=\tilde p_t^{(\beta)}(\cdot\mid q,x_{<t})$, then
\begin{equation}
w_t^{\mathrm{look}}(x_{1:t}; \Pi_{\mathrm{I}})
=
\frac{\tilde p_t^{(\alpha)}(x_t\mid q,x_{<t})}
{\tilde p_t^{(\beta)}(x_t\mid q,x_{<t})}
\psi_t(x_{1:t},q)
\frac{L_t(x_{1:t},q; \Pi_{\mathrm{I}})}{L_{t-1}(x_{1:t-1},q; \Pi_{\mathrm{I}})}.
\end{equation}
When $\beta=\alpha$, the first fractional term above equals to one and $w_t$ has a simpler expression depending on the reward potential and the ratio of lookahead terms.

\paragraph{Lookahead SMC instantiation for Target II}

If the proposal is $\tilde p_t^{(\beta)}(\cdot\mid q,x_{<t})$, then
\begin{equation}
w_t^{\mathrm{look}}(x_{1:t}; \Pi_{\mathrm{II}})
=
\frac{p(x_t\mid q,x_{<t})^{\alpha}}
{\tilde p_t^{(\beta)}(x_t\mid q,x_{<t})}
\psi_t(x_{1:t},q)
\frac{L_t(x_{1:t},q; \Pi_{\mathrm{II}})}{L_{t-1}(x_{1:t-1},q; \Pi_{\mathrm{II}})}.
\end{equation}

\subsection{Mean-square error of prefix and lookahead weights} \label{section:mse}

The theorem below establishes that the exact lookahead weight is the conditional expectation of the full importance weight and therefore minimizes mean-square error among all \(\mathcal F_t\)-measurable approximations. The prefix weight is suboptimal by an explicit excess error term that measures the effect of neglecting future reward contributions. The approximate lookahead weight recovers the exact lookahead behavior up to a Monte Carlo error term that vanishes as the number of lookahead samples increases. Overall, the result formalizes the advantage of lookahead-based weighting over prefix-only weighting.

\begin{theorem}[Mean-square error of prefix, exact lookahead, and approximate lookahead weights] \label{theorem:mse}
Let
\[
W_T=\prod_{s=1}^T
\frac{m_s(X_s\mid q,X_{<s})}{r_s(X_s\mid q,X_{<s})}\,
\psi_s(X_{1:s},q)
\] 
be the full importance weight of a trajectory sampled from $R(x_{1:T}\mid q)=\prod_{s=1}^T r_s(x_s\mid q,x_{<s}).$
For \(t\le T\), define the prefix weight
\[
W_t^{\mathrm{prf}}
:=\prod_{s=1}^t
\frac{m_s(X_s\mid q,X_{<s})}{r_s(X_s\mid q,X_{<s})}\,
\psi_s(X_{1:s},q),
\]
and the exact lookahead weight
$W_t^{\mathrm{look}}
:=
W_t^{\mathrm{prf}}\,L_t(X_{1:t},q;\Pi),$ where $L_t$ is defined in \eqref{eq:generic-lookahead-term-lemma}.
Let \(\mathcal F_t=\sigma(X_{1:t})\). Then
\[
W_t^{\mathrm{look}}=\mathbb E[W_T\mid \mathcal F_t].
\]
Further, let \(\widehat L_t\) be a Monte Carlo estimate of \(L_t\) satisfying \(\mathbb E[\widehat L_t\mid \mathcal F_t]=L_t\) and $\mathbb E\!\left[(\widehat L_t-L_t)^2\mid \mathcal F_t\right]\le C_t/J$. Define the approximate lookahead weight
$W_t^{\mathrm{app}}:=W_t^{\mathrm{prf}}\,\widehat L_t.$ Then:
\begin{align}
&\mathbb E\!\left[(W_T-W_t^{\mathrm{look}})^2\right]
=\mathbb E\!\left[\mathrm{Var}(W_T\mid \mathcal F_t)\right],
\label{eq:mse-lookahead-exact}
\\
&\mathbb E\!\left[(W_T-W_t^{\mathrm{prf}})^2\right]
= \mathbb E\!\left[\mathrm{Var}(W_T\mid \mathcal F_t)\right]
+ \mathbb E\!\left[(W_t^{\mathrm{prf}})^2(L_t-1)^2\right],
\label{eq:mse-prefix-decomp}
\\
&\mathbb E\!\left[(W_T-W_t^{\mathrm{app}})^2\right]
= \mathbb E\!\left[\mathrm{Var}(W_T\mid \mathcal F_t)\right]
+\mathbb E\!\left[(W_t^{\mathrm{prf}})^2(\widehat L_t-L_t)^2\right]
\label{eq:mse-app-unbiased} \\
& \qquad \qquad \qquad \qquad \leq \mathbb E\!\left[\mathrm{Var}(W_T\mid \mathcal F_t)\right]
+\frac{1}{J}\,\mathbb E\!\left[(W_t^{\mathrm{prf}})^2 C_t\right].
\end{align}

\end{theorem}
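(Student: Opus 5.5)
The plan is to first establish the martingale identity $W_t^{\mathrm{look}}=\mathbb E[W_T\mid\mathcal F_t]$, and then derive all three error identities from the standard orthogonality (Pythagorean) decomposition of conditional expectation.

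\emph{Step 1: the conditional-expectation identity.} Factor $W_T=W_t^{\mathrm{prf}}\cdot\prod_{s=t+1}^T \frac{m_s}{r_s}\psi_s$. The first factor is $\mathcal F_t$-measurable, so it pulls out of the conditional expectation. Under $R$, the conditional law of $X_{t+1:T}$ given $X_{1:t}=x_{1:t}$ is $\prod_{s=t+1}^T r_s(x_s\mid q,x_{<s})$. Hence
\[
\mathbb E\Big[\prod_{s=t+1}^T \tfrac{m_s}{r_s}\psi_s \,\Big|\, \mathcal F_t\Big]=\sum_{x_{t+1:T}}\prod_{s=t+1}^T m_s(x_s\mid q,x_{<s})\psi_s(x_{1:s},q)=L_t(X_{1:t},q;\Pi),
\]
because the proposal densities cancel. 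This is exactly \eqref{eq:generic-lookahead-term-lemma}, which gives $W_t^{\mathrm{look}}=\mathbb E[W_T\mid\mathcal F_t]$. We only need the mild support condition $r_s>0$ wherever $m_s\psi_s>0$; I will state it as an implicit assumption.

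\emph{Step 2: the decomposition.} For any $V$ that is square integrable, write $W_T-V=(W_T-W_t^{\mathrm{look}})+(W_t^{\mathrm{look}}-V)$ and expand the square. The cross term is
\[
\mathbb E[(W_T-W_t^{\mathrm{look}})(W_t^{\mathrm{look}}-V)].
\]
When $V$ is $\mathcal F_t$-measurable, conditioning on $\mathcal F_t$ kills this term by Step 1. Moreover $\mathbb E[(W_T-W_t^{\mathrm{look}})^2]=\mathbb E[\mathrm{Var}(W_T\mid\mathcal F_t)]$, which proves \eqref{eq:mse-lookahead-exact}.

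Taking $V=W_t^{\mathrm{prf}}$ gives $W_t^{\mathrm{look}}-V=W_t^{\mathrm{prf}}(L_t-1)$, which proves \eqref{eq:mse-prefix-decomp}.

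\emph{Step 3: the main obstacle.} The difficulty is the approximate weight. $\widehat L_t$ is \emph{not} $\mathcal F_t$-measurable, since it depends on the rollouts. The cross term must then be handled with a conditional independence assumption: given $X_{1:t}$, the rollouts defining $\widehat L_t$ are drawn independently of the continuation $X_{t+1:T}$ that enters $W_T$. I will make this assumption explicit, as it is implicit in the construction of $\widehat L_t$ from fresh samples.

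Under this assumption,
\[
\mathbb E[(W_T-W_t^{\mathrm{look}})(W_t^{\mathrm{look}}-W_t^{\mathrm{app}})\mid\mathcal F_t]=\mathbb E[W_T-W_t^{\mathrm{look}}\mid\mathcal F_t]\;\mathbb E[W_t^{\mathrm{prf}}(L_t-\widehat L_t)\mid\mathcal F_t]=0.
\]
Each factor vanishes: the first by Step 1, the second by unbiasedness. This gives \eqref{eq:mse-app-unbiased} with remainder $\mathbb E[(W_t^{\mathrm{prf}})^2(\widehat L_t-L_t)^2]$.

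For the final inequality, condition on $\mathcal F_t$, pull out $(W_t^{\mathrm{prf}})^2$, and apply the bound $\mathbb E[(\widehat L_t-L_t)^2\mid\mathcal F_t]\le C_t/J$; then take the outer expectation.
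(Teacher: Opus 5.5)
Your proof is correct and follows the paper's own argument: first the identity $W_t^{\mathrm{look}}=\mathbb E[W_T\mid\mathcal F_t]$ via cancellation of the proposal densities, then the Pythagorean decomposition with the cross term removed by conditioning on $\mathcal F_t$. The only divergence is in the approximate-lookahead case, where the paper invokes ``the same orthogonality argument'' on the strength of $\mathbb E[W_t^{\mathrm{app}}\mid\mathcal F_t]=W_t^{\mathrm{look}}$ alone, which does not suffice because $W_t^{\mathrm{app}}$ is not $\mathcal F_t$-measurable; your assumption that the rollouts are conditionally independent of $X_{t+1:T}$ given $X_{1:t}$ is genuinely needed. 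For example, a $\widehat L_t$ built from the realized continuation itself is conditionally unbiased but gives $W_t^{\mathrm{app}}=W_T$ and zero error, so your version is the more rigorous one.
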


%------------------------------------------------------

\begin{proof}
Since
\[
W_T=W_t^{\mathrm{prf}}
\prod_{s=t+1}^T
\frac{m_s(X_s\mid q,X_{<s})}{r_s(X_s\mid q,X_{<s})}\,
\psi_s(X_{1:s},q),
\]
taking the conditional expectation over the future proposal
\(X_{t+1:T}\sim \prod_{s=t+1}^T r_s(\cdot\mid q,X_{<s})\) yields
\[
\mathbb E[W_T\mid \mathcal F_t]
=
W_t^{\mathrm{prf}}\,L_t
=
W_t^{\mathrm{look}}.
\]
Therefore
\[
\mathbb E[(W_T-W_t^{\mathrm{look}})^2]
=
\mathbb E\!\left[\mathrm{Var}(W_T\mid \mathcal F_t)\right],
\]
which proves \eqref{eq:mse-lookahead-exact}.

Next, since \(W_t^{\mathrm{look}}=\mathbb E[W_T\mid \mathcal F_t]\), the error
\(W_T-W_t^{\mathrm{look}}\) is orthogonal to every \(\mathcal F_t\)-measurable random variable,
in particular to \(W_t^{\mathrm{look}}-W_t^{\mathrm{prf}}\). Hence
\[
\mathbb E[(W_T-W_t^{\mathrm{prf}})^2]
=
\mathbb E[(W_T-W_t^{\mathrm{look}})^2]
+
\mathbb E[(W_t^{\mathrm{look}}-W_t^{\mathrm{prf}})^2].
\]
Using \(W_t^{\mathrm{look}}-W_t^{\mathrm{prf}}=W_t^{\mathrm{prf}}(L_t-1)\) gives
\eqref{eq:mse-prefix-decomp}.

Since \(\mathbb E[\widehat L_t\mid \mathcal F_t]=L_t\), then
\(W_t^{\mathrm{app}}=W_t^{\mathrm{prf}}\widehat L_t\) satisfies
\(\mathbb E[W_t^{\mathrm{app}}\mid \mathcal F_t]=W_t^{\mathrm{look}}\), so the same
orthogonality argument gives
\[
\mathbb E[(W_T-W_t^{\mathrm{app}})^2]
=
\mathbb E[(W_T-W_t^{\mathrm{look}})^2]
+
\mathbb E[(W_t^{\mathrm{look}}-W_t^{\mathrm{app}})^2].
\]
Finally,
$W_t^{\mathrm{look}}-W_t^{\mathrm{app}}
=
W_t^{\mathrm{prf}}(L_t-\widehat L_t),$
which yields \eqref{eq:mse-app-unbiased}.

Since $W_t^{\mathrm{prf}}$ is $\mathcal F_t$-measurable,
\begin{align*}
\mathbb E\!\left[(W_t^{\mathrm{prf}})^2(\widehat L_t - L_t)^2\right]
=
\mathbb E\!\left[
(W_t^{\mathrm{prf}})^2
\,\mathbb E\!\left[(\widehat L_t - L_t)^2 \mid \mathcal F_t\right]
\right]
\le
\frac{1}{J}\,
\mathbb E\!\left[(W_t^{\mathrm{prf}})^2 C_t\right].
\end{align*}
Combining this with \eqref{eq:mse-app-unbiased} yields
\[
\mathbb E\!\left[(W_T - W_t^{\mathrm{app}})^2\right]
\le
\mathbb E\!\left[\operatorname{Var}(W_T \mid \mathcal F_t)\right]
+
\frac{1}{J}\,
\mathbb E\!\left[(W_t^{\mathrm{prf}})^2 C_t\right].
\]

\end{proof}

\subsection{Block-wise SMC} \label{section:blockwise-smc}

We now extend the token-wise SMC construction to the case where proposals are made block by block, using blocks of identical size \(B\). The token-wise SMC formulas in the main paper are recovered by setting \(B=1\), so that each block contains a single token. 

Assume for simplicity that \(T=KB\) for some integer \(K\), and partition the sequence as
\[
x_{1:T}
=
\bigl(x_{1:B},\,x_{B+1:2B},\,\dots,\,x_{(K-1)B+1:KB}\bigr).
\]
Thus, block \(k\) is the segment $x_{(k-1)B+1:kB}$ for $k=1,\dots,K.$ The unified full-sequence target is still the same as in the token-wise formulation $\Pi$ defined in \eqref{eq:unified-full-target}. For each block \(k\), define the block transition factor
\begin{equation}
M_k(x_{(k-1)B+1:kB}\mid q,x_{1:(k-1)B})
:=
\prod_{t=(k-1)B+1}^{kB}
m_t(x_t\mid q,x_{<t}),
\label{eq:block:def:Mk}
\end{equation}
and the block reward potential
\begin{equation}
\Psi_k(x_{1:kB},q)
:=
\prod_{t=(k-1)B+1}^{kB}
\psi_t(x_{1:t},q).
\label{eq:block:def:Psik}
\end{equation}
Then $\Pi$ can be rewritten as
\begin{equation}
\Pi(x_{1:T}\mid q)
\propto
\prod_{k=1}^{K}
M_k(x_{(k-1)B+1:kB}\mid q,x_{1:(k-1)B})
\prod_{k=1}^{K}
\Psi_k(x_{1:kB},q).
\label{eq:block:full:target:grouped}
\end{equation}

\begin{lemma}[Prefix-only block-wise SMC target and weight]
Define the unnormalized block-prefix targets
\begin{equation} 
\gamma_k^{\mathrm{prf}}(x_{1:kB}\mid q;\Pi)
=
\prod_{j=1}^{k}
M_j(x_{(j-1)B+1:jB}\mid q,x_{1:(j-1)B})
\prod_{j=1}^{k}
\Psi_j(x_{1:jB},q),
\qquad k=1,\dots,K.
\label{eq:block:simple:gamma}
\end{equation}
Then the following recursion holds:
\begin{equation}
\gamma^{\mathrm{prf}}_k(x_{1:kB}\mid q;\Pi)
=
\gamma^{\mathrm{prf}}_{k-1}(x_{1:(k-1)B}\mid q;\Pi)\,
M_k(x_{(k-1)B+1:kB}\mid q,x_{1:(k-1)B})\,
\Psi_k(x_{1:kB},q).
\label{eq:block:simple:gamma:recursion}
\end{equation}
Let
\begin{equation}
R_k(x_{(k-1)B+1:kB}\mid q,x_{1:(k-1)B})
\label{eq:block:simple:proposal}
\end{equation}
be the proposal distribution used to sample block \(k\). Then the incremental importance weight is
\begin{equation}
w^{\mathrm{prf}}_k(x_{1:kB};\Pi)
=
\frac{
M_k(x_{(k-1)B+1:kB}\mid q,x_{1:(k-1)B})
}{
R_k(x_{(k-1)B+1:kB}\mid q,x_{1:(k-1)B})
}
\Psi_k(x_{1:kB},q).
\label{eq:block:simple:weight}
\end{equation}
\end{lemma}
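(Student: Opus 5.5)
The plan is to prove the two claims directly from the definitions in \eqref{eq:block:simple:gamma}, \eqref{eq:block:def:Mk} and \eqref{eq:block:def:Psik}, mirroring the token-wise Lemma~\ref{lem:prefix_smc}.

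\textbf{Recursion.} Split off the $j=k$ factor from each of the two finite products in \eqref{eq:block:simple:gamma}. The factors with $j=1,\dots,k-1$ are exactly $\gamma^{\mathrm{prf}}_{k-1}(x_{1:(k-1)B}\mid q;\Pi)$. This uses only that $M_j$ and $\Psi_j$ for $j\le k-1$ depend on $x_{1:(k-1)B}$ alone, which is immediate from their definitions as products over $t\le jB$. Since all factors are scalars, commutativity gives \eqref{eq:block:simple:gamma:recursion}. For $k=1$ we use the convention $\gamma_0^{\mathrm{prf}}\equiv 1$, which makes the empty products consistent.

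As a sanity check, I will also note the link to the token-wise targets. Expanding $M_j$ and $\Psi_j$ into their token-level factors shows that $\gamma_k^{\mathrm{prf}}$ in \eqref{eq:block:simple:gamma} equals the token-wise $\gamma_{kB}^{\mathrm{prf}}$ of \eqref{eq:generic-simple-prefix}. The block recursion then also follows by applying \eqref{eq:prefix-only-gamma-recursion} $B$ times and telescoping.

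\textbf{Weight.} Apply the general SMC incremental weight \eqref{eq:smc-weights} at the block level. The current target is $\gamma_k^{\mathrm{prf}}$, the previous one is $\gamma_{k-1}^{\mathrm{prf}}$, and the proposal is $R_k$ for the whole block $x_{(k-1)B+1:kB}$ given $x_{1:(k-1)B}$. Substituting the recursion just proved, $\gamma_{k-1}^{\mathrm{prf}}$ cancels, leaving $M_k\Psi_k/R_k$, which is \eqref{eq:block:simple:weight}.

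\textbf{Side conditions.} The only thing needing care is well-definedness of this cancellation. I would assume that $R_k>0$ wherever $M_k\Psi_k>0$, the standard support condition. I would also assume $\gamma_{k-1}^{\mathrm{prf}}>0$ on particles that are propagated, since zero-weight particles contribute nothing and can be excluded. There is no real obstacle: the argument is pure bookkeeping of finite products.
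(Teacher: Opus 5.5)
Your proposal is correct and follows the same route as the paper, which omits the proof as following directly from the definitions. Splitting off the $j=k$ factor gives the recursion, and substituting it into the block-level analogue of \eqref{eq:smc-weights} cancels $\gamma_{k-1}^{\mathrm{prf}}$. Your convention $\gamma_0^{\mathrm{prf}}\equiv 1$ and the support/positivity conditions are harmless extra care that the paper leaves implicit.
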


The proof follows from the definitions of $\gamma^{\mathrm{prf}}$ and $w^{\mathrm{prf}}$ so we omit it here.

% If the proposal factorizes autoregressively within the block,
% \begin{equation}
% R_k(x_{(k-1)B+1:kB}\mid q,x_{1:(k-1)B})
% =
% \prod_{t=(k-1)B+1}^{kB}
% r_t(x_t\mid q,x_{<t}),
% \label{eq:block:proposal:factorized}
% \end{equation}
% then \eqref{eq:block:simple:weight} becomes
% \begin{equation}
% w_k^{\mathrm{prf}}(x_{1:kB};\Pi)
% =
% \left[
% \prod_{t=(k-1)B+1}^{kB}
% \frac{m_t(x_t\mid q,x_{<t})}{r_t(x_t\mid q,x_{<t})}
% \right]
% \left[
% \prod_{t=(k-1)B+1}^{kB}
% \psi_t(x_{1:t},q)
% \right].
% \label{eq:block:simple:weight:factorized}
% \end{equation}

%%%%%%%%%%%%%%%%%%%%%%%%%%%%%%%%%%%%%%%%%%%%%
%%%%%%%%%%%%%%%%%%%%%%%%%%%%%%%%%%%%%%%%%%%%%
%%%%%%%%%%%%%%%%%%%%%%%%%%%%%%%%%%%%%%%%%%%%%
%%%%%%%%%%%%%%%%%%%%%%%%%%%%%%%%%%%%%%%%%%%%%

\begin{lemma}[Lookahead block-wise SMC target and weight]
Define the unnormalized block-prefix marginal targets
\begin{equation}
\gamma_k^{\mathrm{look}}(x_{1:kB}\mid q;\Pi)
=\sum_{x_{kB+1:T}}
\prod_{t=1}^{T} m_t(x_t\mid q,x_{<t})
\prod_{t=1}^{T} \psi_t(x_{1:t},q),
\qquad k=1,\dots,K.
\label{eq:block:lookahead:gamma}
\end{equation}
Further define the block lookahead term
\begin{equation}
L_k(x_{1:kB},q;\Pi)
:=
\sum_{x_{kB+1:T}}
\prod_{t=kB+1}^{T}
m_t(x_t\mid q,x_{<t})
\prod_{t=kB+1}^{T}
\psi_t(x_{1:t},q).
\label{eq:block:lookahead:Lk}
\end{equation}
Then
\begin{equation}
\gamma_k^{\mathrm{look}}(x_{1:kB}\mid q;\Pi)
=
\left[
\prod_{j=1}^{k}
\left(M_j(x_{(j-1)B+1:jB}\mid q,x_{1:(j-1)B})
\Psi_j(x_{1:jB},q) \right)
\right]
L_k(x_{1:kB},q;\Pi),
\label{eq:block:lookahead:gamma:factorized}
\end{equation}
and therefore
\begin{equation}
\frac{\gamma_k^{\mathrm{look}}(x_{1:kB}\mid q;\Pi)}{\gamma_{k-1}^{\mathrm{look}}(x_{1:(k-1)B}\mid q)}
=
M_k(x_{(k-1)B+1:kB}\mid q,x_{1:(k-1)B})\,
\Psi_k(x_{1:kB},q)\,
\frac{L_k(x_{1:kB},q;\Pi)}{L_{k-1}(x_{1:(k-1)B},q;\Pi)}.
\label{eq:block:lookahead:gamma:recursion}
\end{equation}
Hence, given proposal
\(R_k(x_{(k-1)B+1:kB}\mid q,x_{1:(k-1)B})\),
the incremental importance weight is
\begin{equation}
w_k^{\mathrm{look}}(x_{1:kB};\Pi)
=
\frac{
M_k(x_{(k-1)B+1:kB}\mid q,x_{1:(k-1)B})
}{
R_k(x_{(k-1)B+1:kB}\mid q,x_{1:(k-1)B})
}
\Psi_k(x_{1:kB},q)
\frac{L_k(x_{1:kB},q;\Pi)}{L_{k-1}(x_{1:(k-1)B},q;\Pi)}.
\label{eq:block:lookahead:weight}
\end{equation}
\end{lemma}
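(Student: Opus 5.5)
The plan is to prove the three claims in order: the factorization \eqref{eq:block:lookahead:gamma:factorized}, then the ratio \eqref{eq:block:lookahead:gamma:recursion}, then the weight \eqref{eq:block:lookahead:weight}. The argument mirrors the token-wise lookahead lemma, with tokens replaced by blocks via the definitions \eqref{eq:block:def:Mk} and \eqref{eq:block:def:Psik}.

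\textbf{Factorization.} Start from \eqref{eq:block:lookahead:gamma}. Split each product over $t=1,\dots,T$ into the range $t\le kB$ and the range $t>kB$. The factors with $t\le kB$ depend only on $x_{1:kB}$, so they are constant with respect to the summation variables $x_{kB+1:T}$ and can be pulled out of the sum.
\begin{itemize}
\item The remaining sum is exactly $L_k(x_{1:kB},q;\Pi)$ by \eqref{eq:block:lookahead:Lk}.
\item The pulled-out prefix products $\prod_{t=1}^{kB} m_t$ and $\prod_{t=1}^{kB}\psi_t$ regroup into $\prod_{j=1}^{k} M_j\Psi_j$. This is because the blocks $\{(j-1)B+1,\dots,jB\}$ for $j=1,\dots,k$ partition $\{1,\dots,kB\}$.
\end{itemize}
This gives \eqref{eq:block:lookahead:gamma:factorized}, i.e.\ $\gamma_k^{\mathrm{look}}=\gamma_k^{\mathrm{prf}}L_k$, in the notation of \eqref{eq:block:simple:gamma}.

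\textbf{Ratio.} Divide the factorization at index $k$ by the factorization at index $k-1$. The common factors for blocks $j=1,\dots,k-1$ cancel, leaving $M_k\Psi_k\,L_k/L_{k-1}$. Equivalently, one can invoke the prefix recursion \eqref{eq:block:simple:gamma:recursion}.

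Two boundary conventions are needed.
\begin{itemize}
\item For $k=1$: take $\gamma_0^{\mathrm{look}}=L_0=\sum_{x_{1:T}}\prod m_t\psi_t$, the normalizing constant, with empty prefix products equal to one.
\item For $k=K$: take $L_K\equiv1$, since it is an empty sum over an empty product.
\end{itemize}

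\textbf{Weight.} Substitute the block recursion into the general SMC incremental weight \eqref{eq:smc-weights}, with $\gamma_k^{\mathrm{look}}$ as the target and $R_k$ as the block proposal. That weight is $\gamma_k/(\gamma_{k-1}R_k)$, so plugging in the ratio immediately yields \eqref{eq:block:lookahead:weight}.

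\textbf{Main obstacle.} There is no real difficulty here; the work is bookkeeping.
\begin{itemize}
\item One must be careful that $L_k$ and $L_{k-1}$ are evaluated on the same particle's prefix $x_{1:(k-1)B}\subset x_{1:kB}$.
\item One must state the positivity assumption that $L_{k-1}>0$ and $R_k>0$ wherever $M_k\Psi_k L_k>0$, so that the ratios are well defined.
\item The denominator in \eqref{eq:block:lookahead:gamma:recursion} is written without the $\Pi$ argument; this is only notational and does not affect the argument.
\end{itemize}
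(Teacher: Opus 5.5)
Your proposal is correct and takes the same approach as the paper, which omits the proof as following directly from the definitions: split the products at $kB$, pull out the prefix factors, recognize $L_k$, regroup into blocks, take the ratio, and substitute into the SMC weight formula. Your boundary conventions ($\gamma_0^{\mathrm{look}}=L_0$ equal to the normalizing constant, $L_K\equiv 1$) are a sensible addition that the paper leaves implicit. Strictly, though, $L_K$ is a one-term sum over the empty continuation, not an ``empty sum'' (an empty sum would equal $0$).
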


The proof follows from the definitions of $\gamma^{\mathrm{look}}$ and $w^{\mathrm{look}}$ so we omit it here.

% If the proposal factorizes autoregressively within the block as in
% \eqref{eq:block:proposal:factorized}, then \eqref{eq:block:lookahead:weight} becomes
% \begin{equation}
% w_k(x_{1:kB})
% =
% \left[
% \prod_{t=(k-1)B+1}^{kB}
% \frac{m_t(x_t\mid q,x_{<t})}{r_t(x_t\mid q,x_{<t})}
% \right]
% \left[
% \prod_{t=(k-1)B+1}^{kB}
% \psi_t(x_{1:t},q)
% \right]
% \frac{L_k(x_{1:kB},q)}{L_{k-1}(x_{1:(k-1)B},q)}.
% \label{eq:block:lookahead:weight:factorized}
% \end{equation}

\subsection{Block-wise Metropolis--Hastings} \label{sec:blockwise-mh}

\paragraph{Block-wise Metropolis--Hastings moves.}
At stage \(k\), after proposing or resampling a block, one may apply a block-wise
Metropolis--Hastings (MH) move to rejuvenate particles while preserving the corresponding
intermediate target. The MH move keeps the prefix \(x_{1:(k-1)B}\) fixed and updates only the
current block. This is the block analogue of using MH moves inside SMC for the token-wise targets defined in the paper. 

Let $z := x_{(k-1)B+1:kB}$ denote the current block, and suppose we propose
\[
z' \sim Q_k(\,\cdot \mid q, x_{1:(k-1)B}, z),
\]
where \(Q_k\) is a proposal kernel on the block. Define
\[
x_{1:kB} = (x_{1:(k-1)B}, z),
\qquad
x'_{1:kB} = (x_{1:(k-1)B}, z').
\]

\begin{lemma}[Block-wise MH move for the prefix-only intermediate target]
Let \(\gamma_k^{\mathrm{prf}}(x_{1:kB}\mid q;\Pi)\) be the prefix-only block-wise intermediate target defined in \eqref{eq:block:simple:gamma}. 
Then a Metropolis--Hastings update of block \(k\) that preserves this target has acceptance
probability
\begin{equation}
a_k^{\mathrm{prf}}(z,z')
=
\min\!\left\{
1,\;
\frac{
\gamma_k^{\mathrm{prf}}(x'_{1:kB}\mid q;\Pi)\,
Q_k(z \mid  q, x_{1:(k-1)B}, z')
}{
\gamma_k^{\mathrm{prf}}(x_{1:kB}\mid q;\Pi)\,
Q_k(z' \mid  q, x_{1:(k-1)B}, z)
}
\right\}.
\label{eq:block:mh:simple:generic}
\end{equation}
Using the factorization of \(\gamma_k^{\mathrm{prf}}\), this can be written as
\begin{equation}
a_k^{\mathrm{prf}}(z,z')
=
\min\!\left\{
1,\;
\frac{
M_k(z' \mid q,x_{1:(k-1)B})\,
\Psi_k(x'_{1:kB},q)\,
Q_k(z \mid q, x_{1:(k-1)B}, z')
}{
M_k(z \mid q,x_{1:(k-1)B})\,
\Psi_k(x_{1:kB},q)\,
Q_k(z' \mid  q, x_{1:(k-1)B}, z)
}
\right\}.
\label{eq:block:mh:simple}
\end{equation}
\end{lemma}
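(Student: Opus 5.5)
The plan is to check detailed balance for the Metropolis--Hastings kernel on the block space. The prefix $x_{1:(k-1)B}$ is held fixed, and we view the conditional of $\gamma_k^{\mathrm{prf}}$ on the current block $z$ as the effective target. Since only $z$ moves, the invariant measure to preserve is
\[
\pi_k(z\mid x_{1:(k-1)B}) \propto \gamma_k^{\mathrm{prf}}\bigl((x_{1:(k-1)B},z)\mid q;\Pi\bigr).
\]
Preserving each of these conditionals in turn preserves the joint target $\gamma_k^{\mathrm{prf}}$. This is because the marginal of the prefix is untouched, and the conditional of the block given the prefix is kept invariant.

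\textbf{First step.} Apply the standard MH construction from the Background paragraph, with the general acceptance probability \eqref{eq:mh-acceptance-prob}, now on the block space. The kernel is
\[
K(z,dz') = Q_k(z'\mid\cdot,z)\,a_k^{\mathrm{prf}}(z,z') + \text{rejection mass at } z.
\]
For $z\neq z'$ we verify
\[
\gamma_k^{\mathrm{prf}}(x_{1:kB})\,Q_k(z'\mid z)\,a(z,z') = \min\bigl\{\gamma_k^{\mathrm{prf}}(x_{1:kB})Q_k(z'\mid z),\ \gamma_k^{\mathrm{prf}}(x'_{1:kB})Q_k(z\mid z')\bigr\}.
\]
This expression is symmetric in $(z,z')$, which gives detailed balance and hence invariance. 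The normalizing constant of $\gamma_k^{\mathrm{prf}}$ cancels, so the unnormalized target can be used directly. This gives \eqref{eq:block:mh:simple:generic}.

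\textbf{Second step.} Simplify the ratio using the recursion \eqref{eq:block:simple:gamma:recursion}. This recursion writes
\[
\gamma_k^{\mathrm{prf}}(x_{1:kB}) = \gamma_{k-1}^{\mathrm{prf}}(x_{1:(k-1)B})\,M_k(z\mid q,x_{1:(k-1)B})\,\Psi_k(x_{1:kB},q),
\]
and the same holds with $z'$ in place of $z$. The factor $\gamma_{k-1}^{\mathrm{prf}}(x_{1:(k-1)B})$ depends only on the shared prefix, so it is identical in numerator and denominator and cancels. What remains is exactly \eqref{eq:block:mh:simple}.

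\textbf{Main obstacle.} The only point requiring care is the well-definedness of the ratio, not the algebra. I would state the usual support convention: $Q_k(z'\mid z)>0$ iff $Q_k(z\mid z')>0$, and the ratio is taken to be $0$ when the proposed state has zero target mass. Under this convention detailed balance holds on the support of $\gamma_k^{\mathrm{prf}}$. I would also remark that ergodicity is not claimed; the lemma asserts only that the move preserves the target.
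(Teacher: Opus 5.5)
Your proposal is correct and follows the paper's own (omitted) argument: the first display is the standard MH acceptance ratio for the unnormalized target $\gamma_k^{\mathrm{prf}}$ with block proposal $Q_k$, and the second follows by cancelling the common prefix factor $\gamma_{k-1}^{\mathrm{prf}}(x_{1:(k-1)B}\mid q;\Pi)$ via the recursion \eqref{eq:block:simple:gamma:recursion}. Your detailed-balance check, the observation that the joint target is preserved because the prefix is fixed and the block's conditional is left invariant, and your support convention spell out what the paper summarizes as ``following directly from the definition.''
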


\begin{lemma}[Block-wise MH move for the lookahead intermediate target]
Let \(\gamma_k^{\mathrm{look}}(x_{1:kB}\mid q;\Pi)\) be the lookahead block-wise intermediate target defined in \eqref{eq:block:lookahead:gamma}.
Then a Metropolis--Hastings update of block \(k\) that preserves this target has acceptance
probability
\begin{equation}
a_k^{\mathrm{look}}(z,z')
=
\min\!\left\{
1,\;
\frac{
\gamma_k^{\mathrm{look}}(x'_{1:kB}\mid q)\,
Q_k(z \mid  q, x_{1:(k-1)B}, z')
}{
\gamma_k^{\mathrm{look}}(x_{1:kB}\mid q)\,
Q_k(z' \mid  q, x_{1:(k-1)B},z)
}
\right\}.
\label{eq:block:mh:lookahead:generic}
\end{equation}
Using the factorization of \(\gamma_k^{\mathrm{look}}\), this can be written as
\begin{equation}
a_k^{\mathrm{look}}(z,z')
=
\min\!\left\{
1,\;
\frac{
M_k(z' \mid q,x_{1:(k-1)B})\,
\Psi_k(x'_{1:kB},q)\,
L_k(x'_{1:kB},q)\,
Q_k(z \mid  q, x_{1:(k-1)B}, z')
}{
M_k(z \mid q,x_{1:(k-1)B})\,
\Psi_k(x_{1:kB},q)\,
L_k(x_{1:kB},q)\,
Q_k(z' \mid  q, x_{1:(k-1)B}, z)
}
\right\}.
\label{eq:block:mh:lookahead}
\end{equation}
\label{lemma:targetII:mh}
\end{lemma}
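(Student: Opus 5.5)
The plan is to treat the block move as a standard Metropolis--Hastings kernel on the block variable $z$, with the prefix $x_{1:(k-1)B}$ held fixed. The target is the conditional of $\gamma_k^{\mathrm{look}}$ given that prefix. First, I would fix the prefix and view $\gamma_k^{\mathrm{look}}((x_{1:(k-1)B},z)\mid q;\Pi)$ as an unnormalized density $\pi(z)$ on the finite block space $\mathcal V^B$. The kernel proposes $z'\sim Q_k(\cdot\mid q,x_{1:(k-1)B},z)$ and accepts with the stated probability. It therefore has the form
\[
K(z,z')=Q_k(z'\mid\cdot,z)\,a_k^{\mathrm{look}}(z,z')+\delta_z(z')\,r(z),
\]
where $r(z)$ is the total rejection probability.

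The key step is detailed balance, $\pi(z)K(z,z')=\pi(z')K(z',z)$ for $z\neq z'$. This follows from the identity $\pi(z)Q(z'\mid z)\min\{1,\rho\}=\min\{\pi(z)Q(z'\mid z),\,\pi(z')Q(z\mid z')\}$, which is symmetric in $(z,z')$. Summing detailed balance over $z$ shows that $\pi$ is invariant. Because the prefix is untouched, the joint law $\gamma_k^{\mathrm{look}}(x_{1:kB}\mid q;\Pi)$ is preserved: it equals the prefix marginal $\gamma_{k-1}^{\mathrm{look}}$ times the conditional $\pi(z)$ normalized, as follows from the marginalization in \eqref{eq:block:lookahead:gamma}. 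That establishes \eqref{eq:block:mh:lookahead:generic}.

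To obtain \eqref{eq:block:mh:lookahead}, I would substitute the factorization \eqref{eq:block:lookahead:gamma:factorized}. Both $x_{1:kB}$ and $x'_{1:kB}$ share the blocks $j<k$, so the factors $\prod_{j<k}M_j\Psi_j$ are identical in numerator and denominator and cancel. What remains is $M_k\Psi_kL_k$ evaluated at $z'$ versus $z$, together with the proposal ratio.

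The main subtlety is support. Ratios are only meaningful where $\gamma_k^{\mathrm{look}}(x_{1:kB})>0$ and $Q_k(z'\mid z)>0$. I would adopt the usual convention that the acceptance ratio is $0$ when its numerator vanishes and that the chain starts in the support. I would also require $Q_k(z\mid z')>0$ whenever $Q_k(z'\mid z)>0$ on the support, so that the min identity holds in every case. This convention should be stated explicitly, since $L_k$ may vanish for prefixes with zero future reward.

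A remark is also needed for Algorithm~\ref{alg:general-block-resample-move-smc}, where $L_k$ is replaced by $\widehat L_k$. Exactness there is not claimed by the lemma. It would require a pseudo-marginal argument in which the estimate for the current state is recycled rather than recomputed, and I would mention this without proving it.
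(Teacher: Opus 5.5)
Your proposal is correct and follows the route the paper intends. The paper omits the proof as following directly from the MH acceptance rule. You establish detailed balance for the conditional of $\gamma_k^{\mathrm{look}}$ given the fixed prefix $x_{1:(k-1)B}$, then cancel the shared factors $\prod_{j<k}M_j\Psi_j$ via \eqref{eq:block:lookahead:gamma:factorized}, which is exactly the argument being left to the reader.

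One simplification: your extra requirement that $Q_k(z\mid z')>0$ whenever $Q_k(z'\mid z)>0$ is unnecessary. With your zero-numerator convention, the identity $\pi(z)Q_k(z'\mid z)\min\{1,\rho\}=\min\{\pi(z)Q_k(z'\mid z),\pi(z')Q_k(z\mid z')\}$ already holds in every case.
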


The proofs of the above lemmas follow directly from the definition of the Metropolis–Hastings acceptance ratio and are therefore omitted.

\end{document}